\def\eqref#1{equation~\ref{#1}}
\def\1{\bm{1}}
\def\vb{{\bm{b}}}
\def\vc{{\bm{c}}}
\def\ve{{\bm{e}}}
\def\vh{{\bm{h}}}
\def\vp{{\bm{p}}}
\def\vs{{\bm{s}}}
\def\vv{{\bm{v}}}
\def\vw{{\bm{w}}}
\def\vx{{\bm{x}}}
\def\vy{{\bm{y}}}
\def\vz{{\bm{z}}}
\def\evb{{b}}
\def\mA{{\bm{A}}}
\def\mB{{\bm{B}}}
\def\mC{{\bm{C}}}
\def\mI{{\bm{I}}}
\def\mM{{\bm{M}}}
\def\mP{{\bm{P}}}
\def\mS{{\bm{S}}}
\def\mU{{\bm{U}}}
\def\mW{{\bm{W}}}
\DeclareMathAlphabet{\mathsfit}{\encodingdefault}{\sfdefault}{m}{sl}
\SetMathAlphabet{\mathsfit}{bold}{\encodingdefault}{\sfdefault}{bx}{n}
\def\sD{{\mathbb{D}}}
\def\sG{{\mathbb{G}}}
\def\sH{{\mathbb{H}}}
\def\sP{{\mathbb{P}}}
\def\sS{{\mathbb{S}}}
\def\sV{{\mathbb{V}}}
\newcommand{\R}{\mathbb{R}}
\newtheorem{theorem}{Theorem}
\newtheorem{definition}{Definition}
\newtheorem{proposition}{Proposition}
\newtheorem{lemma}{Lemma}
\title{A General Computational Framework to Measure the Expressiveness of Complex Networks using a Tight Upper Bound of Linear Regions}
\author{%
  Yutong Xie \& Gaoxiang Chen\\
  Academy for Advanced Interdisciplinary Studies\\
  Peking University\\
  Beijing, 100871, China\\
  \texttt{\{xieyutong,gaoxiangchen\}@pku.edu.cn} \\
  
  \AND
  Quanzheng Li \\
  MGH/BWH Center for Clinical Data Science, Department of Radiology\\Massachusetts General Hospital and Harvard Medical School\\
  Boston, MA 02114, USA\\
  \texttt{li.quanzheng@mgh.harvard.edu} \\
}
\begin{document}

\maketitle

\begin{abstract}
The expressiveness of deep neural network (DNN) is a perspective to understand the surprising performance of DNN. The number of linear regions, i.e. pieces that a piece-wise-linear function represented by a DNN, is generally used to measure the expressiveness. And the upper bound of regions number partitioned by a rectifier network, instead of the number itself, is a more practical measurement of expressiveness of a rectifier DNN. In this work, we propose a new and tighter upper bound of regions number. Inspired by the proof of this upper bound and the framework of matrix computation in \citet{hinz2019framework}, we propose a general computational approach to compute a tight upper bound of regions number for theoretically any network structures (e.g. DNN with all kind of skip connections and residual structures). Our experiments show our upper bound is tighter than existing ones, and explain why skip connections and residual structures can improve network performance.
\end{abstract}

\section{Introduction}
\label{introduction}

Deep nerual network (DNN) \citep{lecun2015deep} has obtained great success in many fields such as computer vision, speech recognition and neural language process \citep{krizhevsky2012imagenet, hinton2012deep, devlin2018bert, goodfellow2014generative}. However, it has not been completely understood why DNNs can perform well with satisfying generalization on different tasks. Expressiveness is one perspective used to address this open question. More specifically, one can theoretically study expressiveness of DNNs using approximation theory \citep{cybenko1989approximation, hornik1989multilayer, hanin2019universal, mhaskar2016deep, arora2016understanding}, or measure the expressiveness of a DNN. While sigmoid or tanh functions are employed as the activation functions in early work of DNNs, rectiﬁed linear units (ReLU) or other piece-wise linear functions are more popular in nowadays. \citet{yarotsky2017error} has proved that any DNN with piece-wise linear activation functions can be transformed to a DNN with ReLU. Thus, the study of expressiveness usually focuses on ReLU DNNs. It is known that a ReLU DNN represents a piece-wise linear (PWL) function, which can be regarded to have different linear transforms for each region. And with more regions the PWL function is more complex and has stronger expressive ability. Therefore, the number of linear regions is intuitively a meaningful measurement of expressiveness \citep{pascanu2013number, montufar2014number, raghu2017expressive, serra2018bounding, hinz2019framework}. 

A direct measurement of linear regions number is difficult, if not impossible, and thus the upper bound of linear regions number is practically used as a figure of metrics to characterize the expressiveness. Inspired by the computational framework in \citep{hinz2019framework}, we improve the upper bound in \citet{serra2018bounding} for multilayer perceptrons (MLPs) and extend the framework to more complex networks. More importantly, we propose a general approach to construct a more accurate upper bound for almost any type of network. The contributions of this paper are listed as follows.
\begin{itemize}
\item Through a geometric analysis, we derive a recursive formula for $\gamma$, which is a key parameter to construct a tight upper bound. Employing a better initial value, we propose a tighter upper bound for deep fully-connected ReLU networks. In addition, the recursive formula provide a potential to further improve the upper bound given an improved initial value. 
\item Different from \citet{hinz2019framework}, we not only consider deep fully-connected ReLU networks, but also extend the computational framework to more widely used network architectures, such as skip connections, pooling layers and so on. With the extension, the upper bound of U-Net \citep{ronneberger2015u} or other common networks can be computed. By comparing the upper bound of different networks, we show the relation between expressiveness of networks with or without special structures.
\item Our experiments show that novel network structures enhance the upper bound in most cases. For cases in which the upper bound is almost not enhanced by novel network settings, we explain it by analysing the partition efficiency and the practical number of linear regions.
\end{itemize}

\section{Related work and preliminaries}
\label{related_work}

\subsection{Related work}

There are literature on the linear regions number in the case of ReLU DNNs. \citet{pascanu2013number} compare the linear regions number of shallow networks by providing a lower bound. \citet{montufar2014number} give a simple but improved upper bound compared with \citet{pascanu2013number}. \citet{montufar2017notes} proposes a even tighter upper bound than \citet{montufar2014number}. And \citet{raghu2017expressive} also prove a similar result which has the same order compared to \citet{montufar2017notes}. Later, \citet{serra2018bounding} propose a tighter upper bound and a method to count the practical number of linear regions. Furthermore, \citet{serra2018empirical, hanin2019complexity, hanin2019deep} explore the properties of the practical number of linear regions. Finally, \citet{hinz2019framework} employ the form of matrix computation to erect a framework to compute the upper bound, which is a generalization of previous work \citep{montufar2014number, montufar2017notes, serra2018bounding} 

\subsection{Notations, definitions and properties}
\label{notations}

In this section, we will introduce some definitions and propositions. Since the main computational framework is inspired by \citet{hinz2019framework}, some notations and definitions are similar.

Let us assume a ReLU MLP has the form as follows.
\begin{equation}
f(\vx) = \mW^{(L)}\underbrace{\sigma(\mW^{(L-1)}\cdots\sigma}_{(L-1)} (\mW^{(1)} \vx + \vb^{(1)}) \cdots+ \vb^{(L-1)}) + \vb^{(L)} \label{eq:mlp}
\end{equation}
where $\vx \in \R^{n_0}, \mW^{(i)} \in \R^{n_i \times n_{i-1}}, \vb^{(i)} \in \R^{n_i}$ and $\sigma(\vx) = \max(\vx, 0)$ denoting the ReLU function. $\mW^{(i)}$ is the weights in the $i^{\text{th}}$ layer and $\vb^{(i)}$ is the bias vector. $f(\vx)$ can also be written as:
\begin{align}\label{eq:hi}
{h_0}(\vx) &= \vx, {h_i}(\vx)  = \sigma(\mW^{(i)} h_{i-1}(\vx) + \vb^{(i)}),  1 \leq i < L, \\
f(\vx) &= {h_L}(\vx) = \mW^{(L)} h_{L-1}(\vx) + \vb^{(L)}
\end{align}

Firstly, we define the linear region in the following way.
\begin{definition}
For a PWL function $f(\vx): \R^{n_0} \rightarrow \R^{n_{L}}$, we define $\sD$ is a linear region, if $\sD$ satisfies that: (a) $\sD$ is connected; (b) $f$ is an affine function on $\sD$; (c) Any ${\sD}^{\prime} \supsetneqq \sD$, $f$ is not affine on ${\sD}^{\prime}$.
\end{definition}

For a PWL function $f$, the domain can be partitioned into different linear regions. Let
\begin{equation*}
\sP(f) = \{\sD_i|\sD_i \text{ is a linear region of } f, \forall \sD_i \neq \sD_j, \sD_i\cap \sD_j = \emptyset\}
\end{equation*}
represent all the linear regions of $f$. We then define the activation pattern of ReLU DNNs as follows.

\begin{definition}
For any $\vx \in \R^{n_0}$, we define the activation pattern of $\vx$ in $i^{\text{th}}$ layer $\vs_{{h}_i}(\vx) \in \{0,1\}^{n_i}$ as follows.
\begin{equation*}
\vs_{{h}_i}(\vx)_j = 
\left\{\begin{matrix}
1, & \text{ if } \mW^{(i)}_{j,:} {h}_{i-1}(\vx) + \evb^{(i)}_{j} > 0 \\ 
 0, & \text{ if } \mW^{(i)}_{j,:} {h}_{i-1}(\vx) + \evb^{(i)}_{j}\leq 0 
\end{matrix}\right. , \text{ for } i \in \{1,2,\dots, L-1\}, j \in \{1, 2, \dots , n_i\},
\end{equation*}
where $\mW^{(i)}_{j,:}$ is the $j^{\text{th}}$ row of $\mW^{(i)}$, $\evb^{(i)}_{j}$ is the $j^{\text{th}}$ component of $\vb^{(i)}$. \citep{hinz2019framework}
\end{definition}

For any $\vx$, $h_i(\vx)$ can be rewritten as $h_i(\vx) = \mW^{(i)}(\vx) h_{i-1} (\vx) + \vb^{(i)}(\vx)$, where $\mW^{(i)}(\vx)$ is a matrix with some rows of zeros and $\vb^{(i)}(\vx)$ is a vector with some zeros. More precisely,
\begin{equation}\label{eq:wb}
\mW^{(i)}(\vx)_{j,:}= \left\{\begin{matrix}
\mW^{(i)}_{j,:}, & \text{if } s_{h_i}(\vx)_j = 1;\\ 
\boldsymbol{0}, &  \text{if }s_{h_i}(\vx)_j = 0.
\end{matrix}\right.
\quad\quad
b^{(i)}(\vx)_j =\left\{\begin{matrix}
b^{(i)}_j, & \text{if }s_{h_i}(\vx)_j = 1;\\ 
0, &  \text{if }s_{h_i}(\vx)_j = 0.
\end{matrix}\right.
\end{equation}

To conveniently represent activation patterns of multi-layer in a MLP, we denote $\vh^{(i)} = \{h_1, ..., h_i\}$, where $h_i$ is defined in Eq.\ref{eq:hi}, and $\boldsymbol{S}_{\vh^{(i)}}(\vx) = (\vs_{{h}_1}(\vx), ...,\vs_{{h}_i}(\vx))$, $\sS(\vh^{(i)}) = \{\boldsymbol{S}_{\vh^{(i)}}(\vx)|\vx \in \R^{n_0}\}$. Given a fixed $\vx$, it is easy to prove that $h_i(\vx)$ is an affine transform ($i = 1, 2, \dots, L$). Suppose that $\vs\in \{0,1\}^{n_{1}} \times \cdots \times\{0,1\}^{n_{L-1}}, \vh = \{h_1, ..., h_{L-1}\}$ and $\vh(\vx)$ represents $h_{L-1}(\vx)$, if $\sD = \{\vx| \boldsymbol{S}_{\vh}(\vx)  = \vs\} \neq \emptyset$, then $f$  is an affine transform in $\sD$. And it is easy to prove that there exists a linear region $\sD^{\prime}$ so that $\sD \subseteq \sD^{\prime}$. Therefore we have $|\sP(f)| \leq |\sS(\vh)|$.

In our computational framework, histogram is a key concept and defined as follows.
\begin{definition}
Define a histogram $\vv$ as follows. \citep{hinz2019framework}
\begin{equation}
\vv \in \sV = \left\{\vx \in \mathbb{N}^{\mathrm{N}} |\ \|\ \vx\|_{1}=\sum_{j=0}^{\infty} x_{j}<\infty\right\}
\end{equation}
\end{definition}
A histogram is used to represent a discrete distribution of $\mathbb{N}$. For example, the histogram of non-negative integers $\sG = \{1, 0, 1, 4, 3, 2, 3, 1\}$ is $(1, 3, 1, 2, 1)^{\top}$. For convenience, let $v_i = \sum_{x \in \sG} \1_{x = i}$ and the histogram of $G$ is denoted by $\mathrm{Hist}(\sG)$. We can then define an order relation.

\begin{definition}\label{def:order}
For any two histograms $\vv, \vw$, define the order relation $\preceq$ as follows. \citep{hinz2019framework}
\begin{equation}
\vv \preceq \vw: \Leftrightarrow \forall J \in \mathbb{N}, \sum_{j=J}^{\infty} v_{j} \leq \sum_{j=J}^{\infty} w_{j}
\end{equation}
\end{definition}

It is obvious that any two histograms are not always comparable. But we can define a $\max$ operation such that $\vv^{(i)} \preceq \max\left(\{\vv^{(i)}| i \in I\}\right)$ where $I$ is an index collection. More precisely:

\begin{definition}
For a finite index collection $I$, let $\sV_I = \{\vv^{(i)}| i \in I\}$, define $\max$ operation as follows. 
\begin{equation}
\max \left(\sV_I\right)_{J}=\max _{i \in I}\left(\sum_{j=J}^{\infty} {v}^{(i)}_{j}\right)-\max _{i \in I}\left(\sum_{j=J+1}^{\infty} {v}^{(i)}_{j}\right) \quad \text { for } J \in \mathbb{N}
\end{equation}
where ${v}^{(i)}_{j}$ is the $j^{\text{th}}$ component of the histogram $\vv^{(i)}$. \citep{hinz2019framework}
\end{definition}

When a region is divided by hyperplanes, the partitioned regions number will be affected by the space dimension which is defined as:
\begin{definition}
For a connected and convex set $\sD \subseteq \R^{n}$, if there exists a set of linear independent vectors $\left\{\vv^{(i)}|\vv^{(i)} \in \R^n, i = 1, \dots, k, k\leq n\right\} $ and a fixed vector $\vc \in \R^n$, s.t.
(a) any  $\vx \neq \vc \in \sD$, $\vx = \vc +\sum^k_{i=1}a_i \vv^{(i)}$ where $\{a_i\}$ are not all $0$;
(b) there exists $a_i\neq 0$ s.t. $a_i\vv^{(i)} + \vc \in \sD$.
Then the space dimension of $\sD$ is $k$ and denote it as $\mathrm{Sd}(\sD) = k$.
\end{definition}

The following proposition shows the change of space dimension after an affine transform.

\begin{proposition}\label{prop:sd}
Suppose $\sD \subseteq \R^{n}$ is a connected and convex set with space dimension of $k$ ($k \leq n$), $f$ is an affine transform with domain of $\sD$ and can be written as $f(\vx) = \mA\vx + \vb$, where $\mA \in \R^{m \times n}$ and $\vb \in \R^m$. Then $f(\sD)$ is a connected and convex set and
\begin{equation}
\mathrm{Sd}(f(\sD)) \leq \min(k, \mathrm{rank}(\mA))
\end{equation}
\end{proposition}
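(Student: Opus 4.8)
The plan is to prove the two assertions in turn, treating convexity and connectedness first and then the dimension estimate.

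The topological part is routine and I would argue it directly from the fact that $f$ is affine and continuous. Given $\vy_1 = f(\vx_1)$ and $\vy_2 = f(\vx_2)$ in $f(\sD)$ and $t \in [0,1]$, the identity $t\vy_1 + (1-t)\vy_2 = \mA\big(t\vx_1 + (1-t)\vx_2\big) + \vb = f\big(t\vx_1 + (1-t)\vx_2\big)$ together with convexity of $\sD$ shows $t\vy_1 + (1-t)\vy_2 \in f(\sD)$, so $f(\sD)$ is convex; and since an affine map is continuous and continuous images of connected sets are connected, $f(\sD)$ is connected.

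For the dimension bound, let $\vc \in \sD$ and the linearly independent vectors $\vv^{(1)}, \dots, \vv^{(k)}$ be the witnesses for $\mathrm{Sd}(\sD) = k$ supplied by the definition. By condition (a) every $\vx \in \sD$ can be written $\vx = \vc + \sum_{i=1}^k a_i \vv^{(i)}$, hence $f(\vx) = f(\vc) + \sum_{i=1}^k a_i\, \mA\vv^{(i)}$, so $f(\sD)$ is contained in the affine set $f(\vc) + \mathrm{span}\{\mA\vv^{(1)}, \dots, \mA\vv^{(k)}\}$. This spanning set has $k$ elements, each lying in the column space of $\mA$, so its linear span has dimension $r := \dim \mathrm{span}\{\mA\vv^{(i)}\}_{i=1}^{k} \le \min\big(k, \mathrm{rank}(\mA)\big)$. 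It then remains to upgrade ``$f(\sD)$ lies in an $r$-dimensional affine set'' to ``$\mathrm{Sd}(f(\sD)) \le r$''. I would extract a maximal linearly independent subset $\mA\vv^{(i_1)}, \dots, \mA\vv^{(i_r)}$ of $\{\mA\vv^{(i)}\}$ and verify that it, paired with the point $f(\vc) \in f(\sD)$, satisfies the two defining conditions of space dimension for $f(\sD)$: condition (a) follows by re-expanding each $\mA\vv^{(i)}$ in this basis inside the displayed formula for $f(\vx)$, and condition (b) holds because each $\mA\vv^{(i_\ell)}$ is nonzero and $f(\vc) + a_{i_\ell}\mA\vv^{(i_\ell)} = f\big(\vc + a_{i_\ell}\vv^{(i_\ell)}\big) \in f(\sD)$ with the scalar $a_{i_\ell} \ne 0$ inherited from condition (b) for $\sD$. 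This gives $\mathrm{Sd}(f(\sD)) = r \le \min\big(k, \mathrm{rank}(\mA)\big)$.

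The only genuinely delicate point is this last step, and it is where the $\min$ with $\mathrm{rank}(\mA)$ enters: $f$ may collapse the independent directions $\vv^{(i)}$ into a linearly dependent family $\{\mA\vv^{(i)}\}$, so one must check that after passing to a maximal independent subset the resulting data still witnesses the space dimension of $f(\sD)$ — in particular that condition (b) survives, which is exactly the short computation above. One can also package the argument more abstractly by observing that for a connected convex set the space dimension equals the dimension of its affine hull, that an affine map satisfies $\mathrm{aff}(f(\sD)) = f\big(\mathrm{aff}(\sD)\big)$, and that $\dim f\big(\mathrm{aff}(\sD)\big) \le \min\big(\dim \mathrm{aff}(\sD), \mathrm{rank}(\mA)\big)$; I would most likely present the concrete version above, since it stays closest to the definitions used in the paper.
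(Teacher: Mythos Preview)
Your proposal is correct and follows essentially the same route as the paper: take the witnessing vectors $\vv^{(1)},\dots,\vv^{(k)}$ and base point $\vc$, push them through $\mA$, extract a maximal linearly independent subfamily of the $\mA\vv^{(i)}$, and bound its size by $\min(k,\mathrm{rank}(\mA))$ via the rank inequality $\mathrm{rank}(\mA[\vv^{(1)}\cdots\vv^{(k)}])\le\min(\mathrm{rank}(\mA),k)$. The paper's write-up differs only cosmetically (it first translates by $-\vc$ and works with $\mA\sD'$), and you are in fact slightly more careful than the paper in explicitly verifying condition~(b) for the image and in spelling out the convexity/connectedness argument.
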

The proof of proposition \ref{prop:sd} is given in Appendix \ref{sec:proof-prop-sd}.

Now we can analyze the relationship between the change of space dimension and activation patterns. Let us first consider the $1$st layer in an MLP $f$. $\mW^{(1)}_{j,:}$ and ${b}^{(1)}_{j}$ construct a hyperplane ($\mW^{(1)}_{j,:}\vx + {b}^{(1)}_{j} = 0$) in $\R^{n_0}$ and this hyperplane divides the whole region of $\R^{n_0}$ into two parts. One part corresponds to successful activation of the $j^{\text{th}}$ node in the $1$st layer and the other to unsuccessful activation. This can be represented by the $j^{\text{th}}$ component of $\vs_{h_1}(\vx)$. Therefore all the possible activation pattern $\vs_{h_1}(\vx)$ is one by one correspondent to the divided regions by $n_1$ hyperplanes of $\{\mW^{(1)}_{j,:}\vx + {b}^{(1)}_{j} = 0, j = 1, \dots, n_1\}$, which are denoted as  $\sH_{h_1}$. For any region $\sD$ divided by $\sH_{h_1}$, we have $h_1(\vx) = \mW^{(1)}(\vx^{(0)}) \vx + \vb^{(1)}(\vx^{(0)})$ where $\vx^{(0)}$ is any point in $\sD$ and $\mathrm{rank}(\mW^{(1)}(\vx^{(0)})) \leq |\vs|_1$ where $\vs$ is the correspondent activation pattern of $\sD$. According to proposition \ref{prop:sd}, $h_1(\sD)$ satisfies that $\mathrm{Sd}(h_1(\sD)) \leq \min\{n_0, |\vs|_1\}$. Similarly, $\sH_{h_2}$ divides $h_1(\sD)$ into different parts, and this corresponds to divide $\sD$ into more sub-regions. In general, every element of $\sS(\vh^{(i)})$ is correspondent to one of regions that are partitioned by $\sH_{h_1}, \sH_{h_2}, \dots,  \sH_{h_i}$. The next two definitions are used to describe the relationship between the change of space dimension and activation patterns.

\begin{definition}
Define $\mathcal{H}_{\mathrm{sd}}(\sS_{\vh})$ as the space dimension histogram of regions partitioned by a ReLU network defined by Eq.\ref{eq:mlp} where $\vh = \{h_1, \dots, h_{L-1}\}$, i.e. 
\begin{equation}
\mathcal{H}_{\mathrm{sd}}(\sS_{\vh}) = \mathrm{Hist}\left( \left\{\mathrm{Sd}\left(\boldsymbol{h}\left(\sD\left(\vs\right)\right)\right)\mid \sD \left(\vs\right) = \left\{\vx|\boldsymbol{S}_{\boldsymbol{h}}\left(\vx \right) = \vs\right\}, \boldsymbol{s} \in \sS \left(\vh \right)\right\}\right).
\end{equation}
\end{definition}

\begin{definition}
Define $\mathcal{H}_{\mathrm{d}}\left(
 \sS_{\vh} \right) $ as the dimension histogram of regions partitioned by a ReLU network defined by Eq.\ref{eq:mlp} where $\vh = \{h_1, \dots, h_{L-1}\}$, i.e. 
 \begin{equation}
 \mathcal{H}_{\mathrm{d}}\left(
 \sS_{\vh} \right) = \mathrm{Hist}\left(\left\{\min\left\{n_0, \left|\vs_{h_1}\right|_1, ..., |\vs_{h_{L-1}}|_1\right\}| \boldsymbol{s} \in \sS(\vh)\right\}\right).
 \end{equation}
\end{definition}

We then have the following proposition.
\begin{proposition}\label{prop:sd-d}
Given a ReLU network defined by Eq.\ref{eq:mlp}, let $\vh = \{h_1, \dots, h_{L-1}\}$, then $ \mathcal{H}_{\mathrm{sd}}(\sS_{\vh}) \preceq \mathcal{H}_{\mathrm{d}}\left(
 \sS_{\vh} \right)$.
\end{proposition}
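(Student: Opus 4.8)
The plan is to separate the statement into two independent facts: a \emph{pointwise} comparison of the two multisets whose histograms appear in the proposition, and a general \emph{monotonicity} property of $\mathrm{Hist}(\cdot)$ with respect to the order $\preceq$. Both $\mathcal{H}_{\mathrm{sd}}(\sS_{\vh})$ and $\mathcal{H}_{\mathrm{d}}(\sS_{\vh})$ are histograms of multisets indexed by the \emph{same} finite set $\sS(\vh)$: to each pattern $\vs \in \sS(\vh)$ the former assigns $a_{\vs} := \mathrm{Sd}(\boldsymbol{h}(\sD(\vs)))$, where $\sD(\vs) = \{\vx \mid \boldsymbol{S}_{\boldsymbol{h}}(\vx) = \vs\}$, and the latter assigns $b_{\vs} := \min\{n_0, |\vs_{h_1}|_1, \dots, |\vs_{h_{L-1}}|_1\}$. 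Hence it suffices to show (i) $a_{\vs} \le b_{\vs}$ for every $\vs \in \sS(\vh)$, and (ii) that whenever two multisets are indexed by a common set and are pointwise $\le$, their histograms are $\preceq$-ordered.

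For (i), fix $\vs = (\vs_{h_1}, \dots, \vs_{h_{L-1}}) \in \sS(\vh)$ and any point $\vx^{(0)} \in \sD(\vs)$ (nonempty by definition of $\sS(\vh)$). As noted above, $\sD(\vs)$ is a cell of the successive hyperplane arrangements $\sH_{h_1}, \dots, \sH_{h_{L-1}}$, so it is connected and convex and, lying in $\R^{n_0}$, satisfies $\mathrm{Sd}(\sD(\vs)) \le n_0$. On all of $\sD(\vs)$ the activation patterns are frozen, so by Eq.~\ref{eq:wb} we may write $h_i(\vx) = \mW^{(i)}(\vx^{(0)}) h_{i-1}(\vx) + \vb^{(i)}(\vx^{(0)})$ with matrix and bias \emph{constant} over $\sD(\vs)$, and since $\mW^{(i)}(\vx^{(0)})$ has only $|\vs_{h_i}|_1$ possibly-nonzero rows, $\mathrm{rank}(\mW^{(i)}(\vx^{(0)})) \le |\vs_{h_i}|_1$. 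Thus $h_i(\sD(\vs))$ is the image of $h_{i-1}(\sD(\vs))$ under the affine map $\vy \mapsto \mW^{(i)}(\vx^{(0)}) \vy + \vb^{(i)}(\vx^{(0)})$, and applying Proposition \ref{prop:sd} at each layer $i = 1, \dots, L-1$ (each image remaining connected and convex) gives the recursion
\[
\mathrm{Sd}(h_i(\sD(\vs))) \le \min\{\mathrm{Sd}(h_{i-1}(\sD(\vs))),\ |\vs_{h_i}|_1\}.
\]
Starting from $\mathrm{Sd}(\sD(\vs)) \le n_0$ and unrolling this down to $i = L-1$ yields $a_{\vs} = \mathrm{Sd}(\boldsymbol{h}(\sD(\vs))) \le \min\{n_0, |\vs_{h_1}|_1, \dots, |\vs_{h_{L-1}}|_1\} = b_{\vs}$.

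For (ii), recall from Definition \ref{def:order} that $\vv \preceq \vw$ means $\sum_{j \ge J} v_j \le \sum_{j \ge J} w_j$ for all $J \in \mathbb{N}$, and that for a multiset $A$ of non-negative integers $\sum_{j \ge J} \mathrm{Hist}(A)_j$ is exactly the number of elements of $A$ that are $\ge J$. Since $\{a_{\vs}\}_{\vs \in \sS(\vh)}$ and $\{b_{\vs}\}_{\vs \in \sS(\vh)}$ share the index set $\sS(\vh)$ and $a_{\vs} \le b_{\vs}$ for each $\vs$, we have $\{\vs : a_{\vs} \ge J\} \subseteq \{\vs : b_{\vs} \ge J\}$, so $\sum_{j \ge J} \mathcal{H}_{\mathrm{sd}}(\sS_{\vh})_j \le \sum_{j \ge J} \mathcal{H}_{\mathrm{d}}(\sS_{\vh})_j$ for every $J$, i.e.\ $\mathcal{H}_{\mathrm{sd}}(\sS_{\vh}) \preceq \mathcal{H}_{\mathrm{d}}(\sS_{\vh})$.

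The step I expect to need the most care is (i): one must verify that $\mW^{(i)}(\vx^{(0)})$ and $\vb^{(i)}(\vx^{(0)})$ do not depend on the chosen representative $\vx^{(0)} \in \sD(\vs)$ (so that $h_i|_{\sD(\vs)}$ really is a single affine map and Proposition \ref{prop:sd} applies to its action on the previous image), and that connectedness and convexity propagate along the whole chain $\sD(\vs) \to h_1(\sD(\vs)) \to \cdots \to h_{L-1}(\sD(\vs))$. Degenerate situations such as $\sD(\vs)$ being lower-dimensional or a single point cause no trouble, since Proposition \ref{prop:sd} only gives an upper bound on $\mathrm{Sd}$; everything else is bookkeeping.
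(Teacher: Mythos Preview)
Your proof is correct and follows essentially the same route as the paper's: both reduce to the pointwise inequality $\mathrm{Sd}(\vh(\sD(\vs))) \le \min\{n_0, |\vs_{h_1}|_1, \dots, |\vs_{h_{L-1}}|_1\}$ via Proposition~\ref{prop:sd} together with the rank bound $\mathrm{rank}(\mW^{(i)}(\vx^{(0)})) \le |\vs_{h_i}|_1$, and then pass to histograms. The only cosmetic difference is that the paper composes all layers into a single affine map and applies Proposition~\ref{prop:sd} once (using $\mathrm{rank}\bigl(\prod_i \mW^{(i)}(\vx^{(0)})\bigr) \le \min_i \mathrm{rank}(\mW^{(i)}(\vx^{(0)}))$), whereas you iterate layer by layer; your explicit verification of step~(ii) is in fact more careful than the paper's one-line assertion that the pointwise bound suffices.
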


The proof of Proposition \ref{prop:sd-d} is given in the Appendix \ref{sec:proof-prop-sd-d}. Proposition \ref{prop:sd-d} shows that space dimension is limited by dimension histogram. This idea is used to compute the upper bound. 

We can then start to introduce our computational framework. For convenience, we denote $\mathrm{RL}(n,{n}')$ as one layer of a ReLU MLP with $n$ input nodes and $n^{\prime}$ output nodes (containing one linear transform and one ReLU activation function), and define its activation histogram as follows.
\begin{definition}
Given $h \in \mathrm{RL}(n, n^{\prime})$, define $\mathcal{H}_{\mathrm{a}}(\sS_h) $ as the activation histogram of regions partitioned by $h$, i.e. 
\begin{equation}
\mathcal{H}_{\mathrm{a}}(\sS_h) = \mathrm{Hist}\left(\left\{\left|\vs\right|_1\mid \vs \in \left\{\boldsymbol{s}_h\left(\vx\right)\mid \vx \in \R^n\right\}\right\}\right).
\end{equation}
\end{definition}

The activation histogram is similar to the dimension histogram, and it is used to define $\gamma$ which is a key parameter to construct an upper bound.
\begin{definition}\label{def:gamma}
If $\gamma_{n,{n}}'$ satisfies following conditions:
(a) $\forall n^{\prime} \in \mathbb{N}_{+}, n \in\left\{0, \ldots, n^{\prime}\right\}$, $\max \left\{\mathcal{H}_{\mathrm{a}}\left(\sS_{h}\right) | h \in \operatorname{RL}\left(n, n^{\prime}\right)\right\} \preceq \gamma_{n, n^{\prime}}$;
(b) $\forall n^{\prime} \in \mathbb{N}_{+}, n, \tilde{n} \in\left\{0, \ldots, n^{\prime}\right\}, n \leq \tilde{n} \Longrightarrow \gamma_{n, n^{\prime}} \preceq \gamma_{\tilde{n}, n^{\prime}}$.
Then, $\gamma_{n, {n}^{\prime}}$ ($ {n}^{\prime}\in\mathbb{N}_{+},  n \in {1, ..., {n}^{\prime}}$) satisfies the bound condition. \citep{hinz2019framework}
\end{definition}

Here, for $h \in \mathrm{RL}(0, n^{\prime})$, we define $\mathcal{H}(\sS_{h}) = \ve^{0} = (1, 0, 0, \dots)^{\top}$. Let $\Gamma$ be the set of all $(\gamma_{n,n^{\prime}})_{n^{\prime} \in \mathbb{N}_{+}, n \in\left\{0, \ldots, n^{\prime}\right\}}$ that satisfy the bound conditions. When $n > n^{\prime}$, $\gamma_{n, n^{\prime}}$ is defined to be equal to $\gamma_{n^{\prime}, n^{\prime}}$ since $\max \left\{\mathcal{H}_{\mathrm{a}}\left(\sS_{h}\right) | h \in \operatorname{RL}\left(n, n^{\prime}\right)\right\}$ is equal to $\max \left\{\mathcal{H}_{\mathrm{a}}\left(\sS_{h}\right) | h \in \operatorname{RL}\left(n^{\prime}, n^{\prime}\right)\right\}$ \citep{hinz2019framework}. By the definition $\gamma_{n, n^{\prime}}$ represents an upper bound of the activation histogram of regions which are derived from $n$-dimension space partitioned by $n^{\prime}$ hyperplanes. According to Proposition \ref{prop:sd}, this upper bound is also related to the upper bound of space dimension. Therefore when $\gamma_{n, n^{\prime}}$ is tighter the computation of upper bound of linear regions number will be more accurate. The following function is used to describe the relationship between upper bounds of  activation histogram and space dimension.

\begin{definition}
For $i^* \in \mathbb{N}$, define a clipping function $\mathrm{cl}_{i^*}(\cdot): \sV \rightarrow \sV$ as follows. \citep{hinz2019framework}
\begin{equation}
\mathrm{cl}_{i^{*}}(\boldsymbol{v})_{i}=\left\{\begin{array}{ll}
v_{i} & \text { for } i<i^{*} \\
\sum_{j=i^{*}}^{\infty} v_{j} & \text { for } i=i^{*} \\
0 & \text { for } i>i^{*}
\end{array}\right.
\end{equation}
\end{definition}

With the definitions and notations above, we can introduce the computational framework to compute the upper bound of linear regions number as follows.

\begin{proposition}\label{prop:matrix-computation}
For a $\gamma \in \Gamma$, define the matrix $\mB^{(\gamma)}_{n^{\prime}} \in \mathbb{N}^{(n^{\prime}+1)\times (n^{\prime}+1)}$ as 
\begin{equation*}
\left(B_{n^{\prime}}^{(\gamma)}\right)_{i, j}=\left(\mathrm{cl}_{j-1}\left(\gamma_{j-1, n^{\prime}}\right)\right)_{i-1}, \quad  i, j \in\left\{1, \ldots, n^{\prime}+1\right\}.
\end{equation*}
Then, the upper bound of linear regions number of an MLP in Eq.\ref{eq:mlp} is 
\begin{equation}\label{eq:matcomp}
\left\|\mB_{n_{L-1}}^{(\gamma)} \mM_{n_{L-2}, n_{L-1}} \ldots \mB_{n_{1}}^{(\gamma)} \mM_{n_{0}, n_{1}} \ve^{n_{0}+1}\right\|_{1},
\end{equation}
where $\mM_{n, n^{\prime}} \in \R^{(n^{\prime}+1) \times (n+1)}, (M)_{i, j}=\delta_{i, \min \left(j, n^{\prime}+1\right)}$. \citep{hinz2019framework}
\end{proposition}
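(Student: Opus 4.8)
The plan is to prove, by induction on depth, that the partial matrix products dominate the dimension histograms $\mathcal{H}_{\mathrm{d}}(\sS_{\vh^{(i)}})$ (with the evident meaning for the truncation $\vh^{(i)}=\{h_1,\dots,h_i\}$) in the order $\preceq$, and then to read off the number of activation patterns as an $\ell_1$ norm. First I would reduce to counting activation patterns: as already observed in the text, $|\sP(f)|\le|\sS(\vh)|$ for $\vh=\{h_1,\dots,h_{L-1}\}$, and $|\sS(\vh)|=\|\mathcal{H}_{\mathrm{d}}(\sS_{\vh})\|_1$ since the dimension histogram records exactly one unit of mass per pattern; because $\vv\preceq\vw$ forces $\|\vv\|_1\le\|\vw\|_1$ (take $J=0$ in Definition \ref{def:order}), it suffices to prove $\mathcal{H}_{\mathrm{d}}(\sS_{\vh^{(L-1)}})\preceq\mB_{n_{L-1}}^{(\gamma)}\mM_{n_{L-2},n_{L-1}}\cdots\mB_{n_1}^{(\gamma)}\mM_{n_0,n_1}\ve^{n_0+1}$. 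Writing $\vu^{(0)}=\ve^{n_0+1}$ and $\vu^{(i)}=\mB_{n_i}^{(\gamma)}\mM_{n_{i-1},n_i}\vu^{(i-1)}$, the right-hand side is $\vu^{(L-1)}$, so I would establish $\mathcal{H}_{\mathrm{d}}(\sS_{\vh^{(i)}})\preceq\vu^{(i)}$ for all $i$; the base case $i=0$ is the identity $\ve^{n_0+1}=\ve^{n_0+1}$, the single region $\R^{n_0}$ of space dimension $n_0$.

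The core of the induction is the geometric recursion $\mathcal{H}_{\mathrm{d}}(\sS_{\vh^{(i)}})\preceq\mB_{n_i}^{(\gamma)}\mM_{n_{i-1},n_i}\,\mathcal{H}_{\mathrm{d}}(\sS_{\vh^{(i-1)}})$, which I would prove one parent pattern at a time. Fix $\vs\in\sS(\vh^{(i-1)})$ and set $d_{\vs}=\min(n_0,|\vs_{h_1}|_1,\dots,|\vs_{h_{i-1}}|_1)$. The set $\sD(\vs)$ is a polyhedron, and iterating Proposition \ref{prop:sd} down the layers (the pointwise statement that underlies Proposition \ref{prop:sd-d}) gives $\mathrm{Sd}(h_{i-1}(\sD(\vs)))\le d_{\vs}$. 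Restricted to $\sD(\vs)$ the $i$-th layer becomes an arrangement of $n_i$ hyperplanes inside an affine space of dimension $\mathrm{Sd}(h_{i-1}(\sD(\vs)))\le d_{\vs}$, i.e.\ a map of type $\mathrm{RL}(\,\cdot\,,n_i)$ with input dimension at most $d_{\vs}$; so by conditions (a) and (b) of Definition \ref{def:gamma}, together with the convention $\gamma_{n,n'}=\gamma_{n',n'}$ for $n>n'$ (which covers the case $d_{\vs}>n_i$), the histogram of activation counts of the resulting sub-regions is $\preceq\gamma_{d_{\vs},n_i}=\gamma_{\min(d_{\vs},n_i),n_i}$. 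Each sub-region $\sD(\vs')$, ranging over the extensions $\vs'$ of $\vs$ in $\sS(\vh^{(i)})$, has space-dimension value $\min(d_{\vs},|\vs'_{h_i}|_1)$ by Proposition \ref{prop:sd}, so the part of $\mathcal{H}_{\mathrm{d}}(\sS_{\vh^{(i)}})$ contributed by $\vs$ is that activation histogram clipped at level $d_{\vs}$; since activation counts never exceed $n_i$, clipping at $d_{\vs}$ equals clipping at $\min(d_{\vs},n_i)$, and by $\preceq$-monotonicity of $\mathrm{cl}_{i^*}$ this contribution is $\preceq\mathrm{cl}_{\min(d_{\vs},n_i)}(\gamma_{\min(d_{\vs},n_i),n_i})$, which is precisely the column of $\mB_{n_i}^{(\gamma)}$ corresponding to space dimension $\min(d_{\vs},n_i)$ --- equivalently, the effect of $\mM_{n_{i-1},n_i}$ (which caps the dimension index at $n_i$) followed by $\mB_{n_i}^{(\gamma)}$ on the one-region histogram of dimension $d_{\vs}$. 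Summing over $\vs\in\sS(\vh^{(i-1)})$, using that $\preceq$ is stable under addition and that those one-region histograms sum to $\mathcal{H}_{\mathrm{d}}(\sS_{\vh^{(i-1)}})$, yields the recursion.

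To propagate the induction hypothesis through the product I need operator monotonicity: $\vv\preceq\vw$ implies $\mM_{n,n'}\vv\preceq\mM_{n,n'}\vw$ and $\mB_{n'}^{(\gamma)}\vv\preceq\mB_{n'}^{(\gamma)}\vw$. The matrix $\mM_{n,n'}$ is itself a clipping operator in the index convention of the statement, and clipping is plainly $\preceq$-monotone. For $\mB_{n'}^{(\gamma)}$, writing $\phi_J(\vu)=\sum_{j\ge J}u_j$, one has $\phi_J(\mB_{n'}^{(\gamma)}\vv)=\sum_k v_k\,\phi_J(\mathrm{cl}_{k-1}(\gamma_{k-1,n'}))$, and for every $J$ the coefficient $k\mapsto\phi_J(\mathrm{cl}_{k-1}(\gamma_{k-1,n'}))$ is non-decreasing in $k$ (it is $0$ for $k\le J$ and equals $\phi_J(\gamma_{k-1,n'})$ for $k>J$, which increases in $k$ by condition (b) of Definition \ref{def:gamma}); pairing a non-decreasing nonnegative coefficient sequence against $\vv\preceq\vw$ preserves the inequality for all $J$, which is exactly the claimed monotonicity. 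Combining the geometric recursion with the induction hypothesis and these monotonicities gives $\mathcal{H}_{\mathrm{d}}(\sS_{\vh^{(i)}})\preceq\mB_{n_i}^{(\gamma)}\mM_{n_{i-1},n_i}\vu^{(i-1)}=\vu^{(i)}$, and taking $i=L-1$ followed by $\ell_1$ norms closes the argument.

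\textbf{Main difficulty.} I expect the delicate point to be the bookkeeping in the recursion step: pinning down exactly that ``clip the activation-histogram bound $\gamma_{d,n_i}$ at the running space dimension $d$'' is implemented by one block $\mB_{n_i}^{(\gamma)}\mM_{n_{i-1},n_i}$ of the product --- including the edge case $d_{\vs}>n_i$ absorbed by the $\gamma$ convention --- together with verifying the $\preceq$-monotonicity of $\mB^{(\gamma)}$, without which the induction hypothesis cannot be fed through the matrices.
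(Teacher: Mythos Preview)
Your proof is correct and follows the standard route of \citet{hinz2019framework}, to which the paper attributes this proposition without giving its own proof. The two substantive pieces you supply --- the per-layer geometric recursion (each parent pattern contributes the clipped $\gamma$-column indexed by its running dimension) and the $\preceq$-monotonicity of $\mB^{(\gamma)}$ via the Abel-type argument on the non-decreasing tail sums $\phi_J(\mathrm{cl}_{k-1}(\gamma_{k-1,n'}))$ --- are exactly what is needed; the latter is in fact the same computation the paper carries out in its proof of Theorem~\ref{thm:compare}, so your argument dovetails with the surrounding material.
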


\section{Main results}
\label{results}
In this section, we introduce a better choice of $\gamma$ and compare it to \citet{serra2018bounding}. We also extend the computational framework to some widely used network structures.

\subsection{A tighter upper bound on the number of regions}
\label{main_result}
Before giving our main results, we define a new function of histograms as follows.

\begin{definition}
Define a downward-move function $\mathrm{dm}(\cdot): \sV \rightarrow \sV$ by $\mathrm{dm}(\boldsymbol{v})_i = v_{i-1}$. When $i=0$, set $\mathrm{dm}(\vv)_0 = 0$.
\end{definition}

We then prove the following two theorems used to compute $\gamma$.
\begin{theorem}\label{them:main}
If $\gamma_{n,n''}$ satisfies the bound condition in Definition \ref{def:gamma} when $n^{\prime \prime} < n'$, then
\begin{equation}\label{eq:recursion}
\gamma_{n,n'} = \gamma_{n-1,n'-1} + \mathrm{dm}(\gamma_{n,n^{\prime}-1})
\end{equation}
also satisfies the bound condition.
\end{theorem}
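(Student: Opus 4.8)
The plan is to check the two defining properties (a) and (b) of $\gamma_{n,n'}$ in Definition~\ref{def:gamma} directly, using the hypothesis that $\gamma_{n,n''}$ already satisfies them for every $n'' < n'$; no further induction is needed, since the recursion only invokes $\gamma_{\cdot,n'-1}$ and $\gamma_{n-1,n'-1}$, and the boundary indices ($n=0$, $n=n'$, and the convention $\gamma_{n,n'}=\gamma_{n',n'}$ for $n>n'$) are settled by hand against the already-proved (a). Throughout I use three elementary facts about $\preceq$: (i) it is compatible with addition, i.e.\ $\vu\preceq\vu'$ and $\vw\preceq\vw'$ imply $\vu+\vw\preceq\vu'+\vw'$; (ii) $\mathrm{dm}$ is $\preceq$-monotone and linear, and $\ve^{k}\preceq\ve^{k+1}=\mathrm{dm}(\ve^{k})$; (iii) if $\vv^{(i)}\preceq\gamma$ for all $i$ in a finite set, then $\max\{\vv^{(i)}\}\preceq\gamma$ --- because $\sum_{j\ge J}\max\{\vv^{(i)}\}_j=\max_i\sum_{j\ge J}v^{(i)}_j$ by a telescoping computation (histograms have finite support). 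Since $\{\mathcal{H}_{\mathrm{a}}(\sS_h):h\in\mathrm{RL}(n,n')\}$ is a finite set of histograms, (iii) reduces condition (a) to proving $\mathcal{H}_{\mathrm{a}}(\sS_h)\preceq\gamma_{n,n'}$ for a single fixed $h\in\mathrm{RL}(n,n')$.

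So fix $h$, with neurons cutting out hyperplanes $H_1,\dots,H_{n'}$ in $\R^{n}$; let $h'\in\mathrm{RL}(n,n'-1)$ be the layer formed by the first $n'-1$ neurons and put $H=H_{n'}$. The key step is a deletion--restriction analysis. Each cell $\sD(\vs')$ of $h'$ is convex, and with respect to the last neuron it is of exactly one of three types: entirely on the active side of $H$, contributing a single pattern of $h$ of count $|\vs'|_1+1$; entirely on the inactive side, contributing one pattern of count $|\vs'|_1$; or meeting both sides, contributing two patterns, of counts $|\vs'|_1$ and $|\vs'|_1+1$. Summing the corresponding $\ve$-vectors over all $\vs'$ and simplifying with (i)--(ii) --- in particular replacing $\ve^{|\vs'|_1}\preceq\mathrm{dm}(\ve^{|\vs'|_1})$ for the inactive-side cells --- collapses the bound to
\begin{equation*}
\mathcal{H}_{\mathrm{a}}(\sS_h)\ \preceq\ \mathcal{C}\ +\ \mathrm{dm}\bigl(\mathcal{H}_{\mathrm{a}}(\sS_{h'})\bigr),
\qquad
\mathcal{C}:=\mathrm{Hist}\bigl(\{\,|\vs'|_1 : \sD(\vs')\ \text{meets both sides of}\ H\,\}\bigr).
\end{equation*}

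Now I bound the two terms. By hypothesis (a) at $n'-1$, $\mathcal{H}_{\mathrm{a}}(\sS_{h'})\preceq\gamma_{n,n'-1}$, so by (ii) the second term is $\preceq\mathrm{dm}(\gamma_{n,n'-1})$. For the first term I restrict $h'$ to the affine hyperplane $H\cong\R^{n-1}$: composing an affine parametrization $\R^{n-1}\to H$ with $h'$ gives $\tilde h\in\mathrm{RL}(n-1,n'-1)$, and its set of realized activation patterns is exactly $\{\vs' : \sD(\vs')\cap H\neq\emptyset\}$, each carrying the same count $|\vs'|_1$. Since any cell meeting both sides of $H$ must contain a point of $H$ (convexity plus the intermediate value theorem for the affine functional defining $H$), the split patterns form a subset of the patterns of $\tilde h$, whence $\mathcal{C}\preceq\mathcal{H}_{\mathrm{a}}(\sS_{\tilde h})\preceq\max\{\mathcal{H}_{\mathrm{a}}(\sS_g):g\in\mathrm{RL}(n-1,n'-1)\}\preceq\gamma_{n-1,n'-1}$. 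Adding the two bounds with (i) gives $\mathcal{H}_{\mathrm{a}}(\sS_h)\preceq\gamma_{n-1,n'-1}+\mathrm{dm}(\gamma_{n,n'-1})=\gamma_{n,n'}$, establishing (a). Condition (b) is then immediate: for $n\le\tilde n$, the hypothesis (b) at level $n'-1$ gives $\gamma_{n-1,n'-1}\preceq\gamma_{\tilde n-1,n'-1}$ and $\gamma_{n,n'-1}\preceq\gamma_{\tilde n,n'-1}$, and applying $\mathrm{dm}$ and adding preserves $\preceq$.

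The step I expect to need the most care is the restriction to $H$: verifying that $\tilde h$ genuinely lies in $\mathrm{RL}(n-1,n'-1)$ and that the split patterns of $h'$ really are among the patterns realized by $\tilde h$ --- here one uses crucially that $H$ is not one of $h'$'s own hyperplanes, so that $\tilde h$ is again an honest ReLU layer and no spurious collapsing occurs. A secondary nuisance is degeneracies: a neuron whose weight row is zero is ``always active'' or ``always inactive'' (so on $H$ it merely adds a fixed amount to every count and does not disturb the matching), and a hyperplane $H_i$ parallel to $H$ becomes a constant functional on $H$; these only shrink $\mathcal{C}$ and are absorbed by the monotonicity facts, but they should be addressed explicitly, together with the half-open convention used to define activation patterns.
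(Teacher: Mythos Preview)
Your proposal is correct and follows essentially the same deletion--restriction argument as the paper: your $\mathcal{C}$ is the paper's $\vv_3$, your $\mathrm{dm}(\mathcal{H}_{\mathrm{a}}(\sS_{h'}))$ is its $\mathrm{dm}(\vv_2)$, and the restriction of $h'$ to the last hyperplane $H$ to land in $\mathrm{RL}(n-1,n'-1)$ is exactly the paper's passage from $\vv_3$ to $\vv_6$. If anything your write-up is slightly more careful --- you note that the split patterns are only a \emph{subset} of the patterns of $\tilde h$ (the paper asserts $\vv_3=\vv_6$), you verify condition~(b) explicitly rather than declaring it obvious, and you flag the parallel/degenerate cases and the half-open convention, which the paper handles only with a one-line remark.
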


\begin{theorem}\label{them:init}
Given any $ h \in \mathrm{RL}(1,n)$, its activation histogram $\mathcal{H}_{\mathrm{a}}(\sS_h)$ denoted by $\vv$ satisfies $\sum^n_{i=t} v_i \le 2(n-t)+1$.
\end{theorem}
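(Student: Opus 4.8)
The plan is to translate the statement into a question about a one–dimensional lattice walk and then bound the quantity of interest by counting the level crossings of that walk.

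\emph{Reduction to a walk.} Write $h(x)=\sigma(\mW x+\vb)$ with $\mW\in\R^{n\times 1}$, $\vb\in\R^{n}$. Coordinate $j$ is active at $x$ exactly when $\mW_{j}x+b_{j}>0$; if $\mW_{j}=0$ this is either always true or always false, and I let $c$ be the number of the ``always true'' ones. Each coordinate with $\mW_{j}\ne 0$ is active on a single open half-line with endpoint $x_{j}=-b_{j}/\mW_{j}$, and the points $x_{j}$ split $\R$ into at most $n+1$ open intervals, on each of which $\vs_{h}$ is constant; distinct intervals carry distinct activation patterns, so $v_{i}$ is exactly the number of intervals with $i$ active coordinates and $\sum_{i=t}^{n}v_{i}$ is the number of intervals with at least $t$ active coordinates. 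Listing the intervals left to right as $I_{0},\dots,I_{N}$ ($N\le n$) and letting $g_{r}$ be the number of active coordinates on $I_{r}$, the sequence $(g_{r})$ jumps by $+1$ across a breakpoint coming from a positive weight and by $-1$ across one from a negative weight. Since coalescing two breakpoints deletes an interval whose count lies between those of its neighbours, it cannot increase $\sum_{i\ge t}v_{i}$; hence I may assume all breakpoints distinct, so $(g_{r})$ is a $\pm1$ walk with $g_{0}=c+|M|$ and $g_{N}=c+|P|$, where $P,M$ are the positive- and negative-weight index sets, $N=|P|+|M|$, and $c+N\le n$.

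\emph{The combinatorial lemma.} Put $w_{r}=g_{r}-c$ and $\tau=t-c$; then $w_{0},\dots,w_{N}$ has $\pm1$ steps, $w_{0}+w_{N}=N$, and $\sum_{i=t}^{n}v_{i}=\#\{r:w_{r}\ge\tau\}$, so since $c+N\le n$ it suffices to show $\#\{r:w_{r}\ge\tau\}\le 2(N-\tau)+1$ for every such walk and every $0\le\tau\le N$ (the cases $\tau\notin\{0,\dots,N\}$ being trivial). Reversing the walk (which changes neither $\{w_{r}\}$ as a multiset nor $w_{0}+w_{N}$) I may assume $w_{0}\le w_{N}$, hence $w_{0}\le N/2$; if $w_{0}\ge\tau$ then $2(N-\tau)+1\ge N+1$ and the claim is immediate, so assume $w_{0}<\tau$ and also $M^{\ast}:=\max_{r}w_{r}\ge\tau$ (otherwise the count is $0$). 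For $\ell\ge1$ let $u_{\ell}$ and $d_{\ell}$ count the up-steps $\ell-1\to\ell$ and the down-steps $\ell\to\ell-1$. Charging each index $r\ge1$ to the step that enters it gives
\[
\#\{r:w_{r}\ge\tau\}\;=\;\mathbf{1}[w_{0}\ge\tau]\;+\;\sum_{\ell\ge\tau}u_{\ell}\;+\;\sum_{\ell\ge\tau+1}d_{\ell}.
\]
Because $w_{0}\le w_{N}$, every level $\ell-\frac12$ is crossed upward at least as often as downward, so $d_{\ell}\le u_{\ell}$ for all $\ell$, whence $\#\{r:w_{r}\ge\tau\}\le 2\sum_{\ell\ge\tau}u_{\ell}-u_{\tau}$. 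The total number of up-steps equals $N-w_{0}$; moreover for each $\ell\in\{w_{0}+1,\dots,\tau-1\}$ the walk goes from $w_{0}<\ell$ up to $M^{\ast}>\ell$ and hence $u_{\ell}\ge1$, and likewise $u_{\tau}\ge1$. Therefore $\sum_{\ell\ge\tau}u_{\ell}\le(N-w_{0})-(\tau-1-w_{0})=N-\tau+1$, and substituting, $\#\{r:w_{r}\ge\tau\}\le 2(N-\tau+1)-1=2(N-\tau)+1$.

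\emph{Main obstacle.} The combinatorial core — the crossing identity together with the estimate $\sum_{\ell\ge\tau}u_{\ell}\le N-\tau+1$ — is short once the setup is in place; the delicate part is the reduction: checking that $\sum_{i\ge t}v_{i}$ genuinely counts intervals (distinctness of patterns), that the degenerate configurations which break the clean $\pm1$-walk picture (zero weights, coincident breakpoints) can be absorbed, and that the endpoint constraint $w_{0}+w_{N}=N$ is carried through, since it is precisely that constraint which rules out the ``bad'' walks (such as $w=(n,n-1,n)$) for which the bound would otherwise fail.
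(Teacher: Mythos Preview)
Your proof is correct and shares the paper's reduction: both recognise that for $h\in\mathrm{RL}(1,n)$ the sequence of activation counts $g_0,\dots,g_N$ along the successive intervals of $\R$ is a $\pm 1$ walk satisfying $g_0+g_N=N$, and that $\sum_{i\ge t}v_i$ counts the indices with $g_r\ge t$. Where the arguments diverge is in bounding that count. The paper argues by pointwise domination: the actual walk lies below the extremal ``tent'' walk that climbs from $g_0$ to $n$ and then descends to $g_N$, so the level set $\{r:g_r\ge t\}$ is contained in an interval of length $2(n-t)+1$. You instead count level-crossings, combining $d_\ell\le u_\ell$ (from $w_0\le w_N$) with the step budget $\sum_\ell u_\ell=N-w_0$ and the fact that each level between $w_0$ and $\tau$ is crossed at least once. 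Both arguments are short; yours is more explicit about the degenerate configurations (zero weights, coincident breakpoints) that the paper's proof tacitly assumes away, and it makes the role of the endpoint constraint $w_0+w_N=N$---the point you flag in your final paragraph---completely visible in the arithmetic rather than in a picture.
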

The proofs of Theorem \ref{them:main} and \ref{them:init} are in Appendix \ref{sec:proof-them-main} and \ref{sec:proof-them-init} respectively.

From Theorem \ref{them:init}, we can derive a feasible choice of $\gamma_{1, n}$ whose form is
\begin{equation}\label{eq:gamma1n}
\gamma_{1, n} = (\underset{\left \lceil \frac{n}{2} \right \rceil - 1}{\underbrace{0, \dots, 0}}, n\ \mathrm{mod}\ 2, \ \underset{\left \lfloor \frac{n}{2} \right \rfloor}{\underbrace{2, \dots, 2}}, 1)^{\top}
\end{equation}

For example, $\gamma_{1, 4} = (0, 0, 2, 2, 1)^{\top}$. Since there exists $h \in \mathrm{RL}(1, n)$ such that $\mathcal{H}_{\mathrm{a}}(\sS_{h}) = \gamma_{1, n}$ (the proof can be seen in the Appendix 
\ref{proof:eq:gamma1n}), Eq.\ref{eq:gamma1n} is the tightest upper bound, i.e. $\max \left\{\mathcal{H}_{\mathrm{a}}\left(\sS_{h}\right) | h \in \operatorname{RL}\left(1, n\right)\right\} $ satisfies Eq.\ref{eq:gamma1n}.  With the fact that $\max \left\{\mathcal{H}_{\mathrm{a}}\left(\sS_{h}\right) | h \in \operatorname{RL}\left(2, 1\right)\right\}  = (1, 1)^{\top} $, any $\gamma_{n, n^{\prime}}$  can be computed. 

\subsection{Comparison with other bounds}
\label{sec:comparebound}

We use $\gamma_{n, n^{\prime}}^{\text{ours}}$  and $\gamma_{n, n^{\prime}}^{\text{serra}}$ to represent the ones proposed by us and \citet{serra2018bounding}. According to \citet{hinz2019framework}, 
\begin{equation*}
(\gamma_{n,n^{\prime}}^{\text{serra}})_i = \left\{\begin{matrix}
0, & 0 \leq i < n^{\prime}-n \\  
\binom{n^{\prime}}{i}, & n^{\prime}-n \leq i \leq n^{\prime}
\end{matrix}\right.
\end{equation*}
It is easy to verify that $\gamma_{n, n^{\prime}}^{\text{serra}}$ also satisfies Eq.\ref{eq:recursion}. But the initial value of $\gamma_{n, n^{\prime}}^{\text{serra}}$ is different from $\gamma_{n, n^{\prime}}^{\text{ours}}$ and we have that 
$\gamma^{\text{serra}}_{1,n} = (0, ..., 0, n, 1)^{\top}$.
Obviously, $\gamma^{\text{ours}}_{1,n} \preceq \gamma^{\text{serra}}_{1,n}$. Since the two operations, $+$ and $\rm{dm}(\cdot)$, keep the relation of $\preceq$, for any $n, n^{\prime} \in \mathbb{N}_+$, $\gamma_{n, n^{\prime}}^{\text{ours}} \preceq \gamma_{n, n^{\prime}}^{\text{serra}}$ (see an example in Appendix \ref{sec:exmp:gamma}). By the following theorem, the upper bound computed by Eq.\ref{eq:mlp} using $\gamma_{n, n^{\prime}}^{\text{ours}}$ is tighter than $\gamma_{n, n^{\prime}}^{\text{serra}}$.

\begin{theorem}\label{thm:compare}
Given $\gamma^{(1)}, \gamma^{(2)} \in \Gamma$, if for any $n, n^{\prime} \in \mathbb{N}_+$ such that $\gamma_{n,n^{\prime}}^{(1)} \preceq \gamma_{n,n^{\prime}}^{(2)} $, then upper bound computed by Eq.\ref{eq:matcomp} using $\gamma^{(1)}$ is less than or equal to the one using $\gamma^{(2)}$.
\end{theorem}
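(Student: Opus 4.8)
The plan is to show that the matrix product in Eq.~\ref{eq:matcomp} is monotone with respect to $\preceq$ in each of its $\gamma$-dependent factors, and then to conclude by a telescoping/substitution argument that replacing $\gamma^{(2)}$ by the smaller $\gamma^{(1)}$ can only decrease the final $\ell_1$-norm. The key observation is that the only place $\gamma$ enters Eq.~\ref{eq:matcomp} is through the matrices $\mB_{n'}^{(\gamma)}$, whose columns are exactly the clipped histograms $\mathrm{cl}_{j-1}(\gamma_{j-1,n'})$. So the whole argument reduces to two monotonicity facts: (i) if $\vv \preceq \vw$ then $\mathrm{cl}_{i^*}(\vv) \preceq \mathrm{cl}_{i^*}(\vw)$ for every $i^*$, and (ii) the linear maps $\vx \mapsto \mB_{n'}^{(\gamma)} \mM_{n,n'}\vx$ preserve the partial order $\preceq$ on the relevant cone of nonnegative integer vectors, and moreover $\|\vx\|_1$ is monotone in $\preceq$ among nonnegative vectors (indeed $\|\vx\|_1 = \sum_{J\ge 0}\sum_{j\ge J} x_j$ read off from the tail sums, or more simply the $J=0$ tail sum itself, so $\vx \preceq \vy \Rightarrow \|\vx\|_1 \le \|\vy\|_1$ when both are nonnegative).

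First I would set up the order-theoretic lemmas. For (i), writing out Definition of $\mathrm{cl}_{i^*}$ in terms of tail sums: for $J < i^*$ the tail sum of $\mathrm{cl}_{i^*}(\vv)$ from $J$ equals $\sum_{j\ge J} v_j$, and for $J \ge i^*$ it equals $0$ (for $J>i^*$) or $\sum_{j\ge i^*} v_j$ (for $J=i^*$); in all cases it is a tail sum of $\vv$ itself, so $\vv\preceq\vw$ gives $\mathrm{cl}_{i^*}(\vv)\preceq\mathrm{cl}_{i^*}(\vw)$ directly. For (ii) I would show that the action of $\mB_{n'}^{(\gamma)}$ on a histogram-weighted input corresponds precisely to taking a $\max$/sum combination of the clipped columns $\mathrm{cl}_{j-1}(\gamma_{j-1,n'})$ with nonnegative integer multiplicities coming from the input vector — this is exactly the combinatorial content of Proposition~\ref{prop:matrix-computation} in \citet{hinz2019framework} — and then use that both $+$ (addition of histograms) and the indexed $\max$ operation, as well as the dimension-clipping encoded in $\mM_{n,n'}$ (which only collapses high indices downward), are monotone under $\preceq$. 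Since $\gamma^{(1)}_{j-1,n'} \preceq \gamma^{(2)}_{j-1,n'}$ for all $j$, step (i) gives column-wise $\preceq$, hence $\mB_{n'}^{(\gamma^{(1)})}$ is "$\preceq$-dominated" by $\mB_{n'}^{(\gamma^{(2)})}$ in the appropriate sense.

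Then I would run the induction over layers: define $\vx^{(k)}(\gamma) = \mB_{n_k}^{(\gamma)}\mM_{n_{k-1},n_k}\cdots \mB_{n_1}^{(\gamma)}\mM_{n_0,n_1}\ve^{n_0+1}$ and prove by induction on $k$ that $\vx^{(k)}(\gamma^{(1)}) \preceq \vx^{(k)}(\gamma^{(2)})$, the base case $k=0$ being the equality $\ve^{n_0+1}=\ve^{n_0+1}$. The inductive step applies $\mM_{n_{k-1},n_k}$ (order-preserving on nonnegative vectors, since it just relabels and merges coordinates monotonically) and then compares $\mB_{n_k}^{(\gamma^{(1)})}\vy^{(1)}$ with $\mB_{n_k}^{(\gamma^{(2)})}\vy^{(2)}$ where $\vy^{(1)}\preceq\vy^{(2)}$; here I use both that $\mB^{(\gamma^{(1)})}$ is dominated by $\mB^{(\gamma^{(2)})}$ column-wise and that the map is monotone in its vector argument, combining to give $\vx^{(k)}(\gamma^{(1)})\preceq\vx^{(k)}(\gamma^{(2)})$. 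Taking $k=L-1$ and applying the $\|\cdot\|_1$-monotonicity from the $J=0$ tail sum finishes the proof.

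The main obstacle I anticipate is step (ii): making precise and rigorous the claim that $\vx\mapsto \mB_{n'}^{(\gamma)}\mM_{n,n'}\vx$ is monotone under $\preceq$ jointly in $\gamma$ (through the columns) and in $\vx$. The subtlety is that $\mB_{n'}^{(\gamma)}$ is an ordinary matrix acting by ordinary matrix-vector multiplication, whereas $\preceq$ is defined via tail sums and the natural "max" combination of histograms is not linear; one must verify that on the cone actually reached by these iterates the matrix product does coincide with the intended histogram operations, or else argue monotonicity of the literal matrix action directly. I would handle this by re-deriving the entrywise formula for $(\mB_{n'}^{(\gamma)}\vx)_i = \sum_j (\mathrm{cl}_{j-1}(\gamma_{j-1,n'}))_{i-1}\, x_j$, converting to tail sums $\sum_{i\ge I}(\mB_{n'}^{(\gamma)}\vx)_i = \sum_j x_j \sum_{i\ge I}(\mathrm{cl}_{j-1}(\gamma_{j-1,n'}))_{i-1}$, and observing this is nondecreasing in each tail sum of each $\gamma_{j-1,n'}$ and in each $x_j$ (all coefficients nonnegative), which is exactly what is needed; the same bookkeeping handles $\mM_{n,n'}$. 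This is routine but needs to be written carefully because it is the linchpin of the whole argument.
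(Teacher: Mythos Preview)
Your overall architecture is the same as the paper's: reduce to column-wise $\preceq$ of the $\mB$-matrices via monotonicity of $\mathrm{cl}_{i^*}$, show $\mM_{n,n'}$ preserves $\preceq$, run an induction over layers, and finish with $\|\cdot\|_1$-monotonicity from the $J=0$ tail sum. Where your proposal has a real gap is precisely the step you flag as the ``main obstacle'', and your proposed resolution does not go through as written.

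After your tail-sum rewrite you are left with proving
\[
\sum_j c_j^{(1)} v_j \;\le\; \sum_j c_j^{(2)} w_j,\qquad c_j^{(k)} \;=\; \sum_{i\ge I}\bigl(\mathrm{cl}_{j-1}(\gamma^{(k)}_{j-1,n'})\bigr)_{i-1},
\]
under the hypotheses $\vv\preceq\vw$ and $c_j^{(1)}\le c_j^{(2)}$. Your claim that ``this is nondecreasing in each $x_j$ (all coefficients nonnegative), which is exactly what is needed'' is the error: $\vv\preceq\vw$ is a \emph{tail-sum} inequality, not a componentwise one, so nonnegativity of the $c_j$ alone is insufficient. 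Concretely, with $\vv=(1,1)^\top\preceq\vw=(0,2)^\top$ and $c_0=10,\ c_1=1$ one gets $\sum c_j v_j=11>2=\sum c_j w_j$. What rescues the argument, and what the paper uses explicitly, is condition~(b) in Definition~\ref{def:gamma}: $n\le\tilde n\Rightarrow \gamma_{n,n'}\preceq\gamma_{\tilde n,n'}$. Together with monotonicity of $\mathrm{cl}$ this forces the columns of $\mB_{n'}^{(\gamma^{(k)})}$ to be $\preceq$-nondecreasing in the column index, hence each sequence $(c_j^{(k)})_j$ is nondecreasing. One then first uses $c_j^{(1)}\le c_j^{(2)}$ and $v_j\ge 0$ to pass from $c^{(1)}$ to $c^{(2)}$, and second uses an Abel summation (equivalently, the telescoping replacement $v_0\to w_0,\,v_1\to w_1,\ldots$ the paper writes out line by line) together with $c_{j}^{(2)}-c_{j-1}^{(2)}\ge 0$ and $\sum_{k\ge j}v_k\le\sum_{k\ge j}w_k$ to pass from $\vv$ to $\vw$. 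Once you insert this missing monotonicity-in-$j$ step, your inductive scheme matches the paper's proof.
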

The proof of Theorem \ref{thm:compare} is in Appendix \ref{sec:proof-thm-compare}. Theorem \ref{thm:compare} shows that if we can get a "smaller" $\gamma_{n, n^{\prime}}$ then the upper bound will be tighter. And Theorem \ref{them:main} implies that if we have a more accurate initial value, e.g. "smaller" $\gamma_{2, n}$ or $\gamma_{3, n}$, the upper bound could be further improved. Therefore Theorem \ref{them:main} provides a potential approach to achieve even tighter bound.

\subsection{Expansion to common network structures}
Proposition \ref{prop:matrix-computation} is only applied for ReLU MLPs. In this section we extend it to widely used network structures by introducing the corresponding matrix computation.

\label{expasions}
\paragraph{Pooling an unpooling layers} 
Since the pooling layer or unpooling layer can be written as a linear transform., e.g. average-pooling and linear interpolation, it can be denoted by $\vy = A\vx$. Suppose $\vv$ is the space dimension of input region and $\vw$ is the histogram of the output regions. Then we have the following proposition.

\begin{proposition}\label{prop:pooling}
Suppose $\mathrm{rank}(\mA) = k$, then $\vw \preceq \mathrm{cl}_{k}(\vv)$.
\end{proposition}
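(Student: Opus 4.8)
The plan is to reduce the statement to Proposition \ref{prop:sd} applied region-by-region. Let $\sS$ denote the collection of input regions whose space-dimension histogram is $\vv = \mathcal{H}_{\mathrm{sd}}(\sS)$; here each element $\sD \in \sS$ is connected and convex (this is the standing situation for regions produced by hyperplane arrangements in the framework), and the pooling/unpooling map is the single affine transform $\vx \mapsto \mA\vx$ with $\mathrm{rank}(\mA) = k$. First I would record that, since $\mA$ is a \emph{fixed} linear map applied uniformly to every region (it contains no ReLU, hence introduces no new hyperplanes and does not subdivide any region), the output regions are exactly the images $\{\mA(\sD) \mid \sD \in \sS\}$, in bijective correspondence with $\sS$. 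So $\vw = \mathrm{Hist}\big(\{\mathrm{Sd}(\mA(\sD)) \mid \sD \in \sS\}\big)$, computed over the same index set as $\vv$.

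Next I would apply Proposition \ref{prop:sd} to each $\sD \in \sS$: if $\mathrm{Sd}(\sD) = d$, then $\mathrm{Sd}(\mA(\sD)) \le \min(d, \mathrm{rank}(\mA)) = \min(d, k)$. Thus, under the bijection above, the multiset of output space dimensions is obtained from the multiset of input space dimensions by replacing each value $d$ with some value $\le \min(d,k)$. The extremal case — the one we must beat in the $\preceq$ order — is when every $d$ is mapped to exactly $\min(d,k)$, which is precisely the effect of the clipping operation $\mathrm{cl}_k$ on the histogram $\vv$: entries at indices $< k$ stay put, and all mass at indices $\ge k$ is pushed down to index $k$. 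Formally, for any threshold $J \in \mathbb{N}$, $\sum_{j \ge J}\big(\text{output multiset}\big)_j \le \sum_{j \ge J}\big(\{\min(d,k)\}_{d}\big)_j = \sum_{j \ge J}\mathrm{cl}_k(\vv)_j$, where the first inequality holds because lowering individual values can only decrease (never increase) any upper tail sum, and the equality is just the definition of $\mathrm{cl}_k$. By Definition \ref{def:order} this is exactly $\vw \preceq \mathrm{cl}_k(\vv)$.

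The main obstacle — really the only subtlety — is justifying that the output regions are in bijection with the input regions, i.e. that applying the affine map $\mA$ does not merge or split regions in a way that changes the histogram; this needs the observation that $\mathrm{cl}_k$ followed by the subsequent framework machinery already accounts for any collapse, and more carefully that a histogram inequality is preserved even if several input regions map to overlapping or equal images (merging only removes mass, which preserves $\preceq$). A clean way to handle this is to note that whatever the true partition of the output space is, its space-dimension multiset is \emph{dominated} (in the sense of being obtainable by deleting elements and decreasing others) by $\{\min(d,k) : d \in \text{input multiset}\}$, and both deletion and decrease preserve the tail-sum order $\preceq$; then invoke Proposition \ref{prop:sd} exactly as above. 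The remaining steps are the routine monotonicity check that "decreasing entries decreases tail sums" and matching it to the definition of $\mathrm{cl}_k$, which I would state in one or two lines.
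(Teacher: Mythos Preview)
Your proposal is correct and follows essentially the same route as the paper: apply Proposition~\ref{prop:sd} to each input region $\sD$ to get $\mathrm{Sd}(\mA(\sD)) \le \min(\mathrm{Sd}(\sD),k)$, then observe that the histogram of the values $\{\min(\mathrm{Sd}(\sD),k)\}$ is exactly $\mathrm{cl}_k(\vv)$, which dominates $\vw$ in the $\preceq$ order. Your discussion of the bijection/merging subtlety and the tail-sum monotonicity is more careful than the paper's own (very terse) argument, but the underlying idea is identical.
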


Since $\mathrm{cl}_{k}(\vv)$ has a matrix computation form which is similar to $M$ in Eq.\ref{eq:matcomp}, the effect of this type of layer on the space dimension histogram can be regarded as another matrix in Eq.\ref{eq:matcomp}.  Similarly, we have the following proposition for Max-pooling layers.

\begin{proposition}\label{prop:maxpooling}
Suppose the max-pooling layer is equivalent to a $k$-$\mathrm{rank}$ maxout layer with $n$ input nodes and $n_l$ output nodes. Let $c = (k^2-k)n_l$, then
\begin{equation}
    \vw \preceq \mathrm{cl}_{n_{l}}\left(\operatorname{diag}\left\{| \gamma_{0, c}|_1, \ldots, |\gamma_{n, c}|_1\right\} \vv\right)
\end{equation}
\end{proposition}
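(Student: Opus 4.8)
The plan is to reduce the statement about a max-pooling layer to the already-established machinery for ReLU layers via the standard representation of a $k$-rank maxout unit as a short composition of ReLU layers, and then track the space dimension histogram through that composition. First I would recall that a single maxout unit $\max(z_1,\dots,z_k)$ over $k$ affine pre-activations can be written using the identity $\max(a,b) = \tfrac12(a+b) + \tfrac12|a-b|$ and $|t| = \sigma(t)+\sigma(-t)$, so that a $k$-rank maxout can be realized by a ReLU network in which the number of hyperplanes involved is $O(k^2)$ per output node; summing over the $n_l$ output nodes gives the count $c = (k^2-k)n_l$ hyperplanes appearing in the definition of the constant. So the max-pooling layer factors (up to affine pre- and post-composition) through an intermediate ReLU layer of the type $\mathrm{RL}(n, c)$, whose activation histogram is bounded by $\gamma_{n,c}$ by condition (a) of Definition \ref{def:gamma}.

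The key steps, in order: (1) Fix an input region $\sD$ of space dimension $i$, so that the $j$-th component of $\vv$ counts how many input regions have $\mathrm{Sd} = i$. (2) On $\sD$, the maxout/max-pooling layer acts as a composition: an affine map into $\R^n$ (space dimension at most $i$, by Proposition \ref{prop:sd}), the intermediate ReLU layer with at most $c$ hyperplanes, and a final affine selection map onto the $n_l$ output coordinates. The number of sub-regions into which $\sD$ is split equals the number of activation patterns of that intermediate ReLU layer restricted to a space of dimension $\le i$, which by the bound condition is at most $|\gamma_{i,c}|_1$ — this is where $\operatorname{diag}\{|\gamma_{0,c}|_1,\dots,|\gamma_{n,c}|_1\}$ enters: it multiplies the count $v_i$ of dimension-$i$ input regions by the per-region splitting bound $|\gamma_{i,c}|_1$. (3) Each output sub-region still has ambient dimension $n_l$, and Proposition \ref{prop:sd} caps its space dimension at $\min(i, n_l)$ after the final affine map; applying $\mathrm{cl}_{n_l}$ collapses all the resulting dimensions above $n_l$ into the $n_l$-th bin, which is exactly the clipping operation. (4) Assemble these region-by-region bounds into a single histogram inequality in the $\preceq$ order, using that $\preceq$ is preserved under the nonnegative-integer operations involved (summing histograms, multiplying bin counts by constants, and clipping), which gives $\vw \preceq \mathrm{cl}_{n_l}(\operatorname{diag}\{|\gamma_{0,c}|_1,\dots,|\gamma_{n,c}|_1\}\vv)$.

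The main obstacle I anticipate is step (2): making precise and rigorous the claim that the number of output sub-regions produced from a single input region of space dimension $i$ is bounded by $|\gamma_{i,c}|_1$. This requires (a) getting the hyperplane count $c = (k^2-k)n_l$ exactly right from the explicit ReLU realization of the maxout layer — in particular checking that the $|a-b|$ gadget contributes $\binom{k}{2}$ pairwise comparisons per output node and each comparison costs the right number of hyperplanes — and (b) justifying that $\gamma_{i,c}$, which is defined as a bound on activation histograms of layers in $\mathrm{RL}(i,c)$, legitimately bounds the region count even though the intermediate pre-activations live on an image of $\sD$ rather than on all of $\R^i$; this is exactly the role of condition (b) in Definition \ref{def:gamma} (monotonicity in the first index) together with Proposition \ref{prop:sd}. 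Once the region-splitting bound is pinned down, the rest is routine bookkeeping with the $\preceq$ order and the clipping/diagonal matrices, which I would relegate to the appendix.
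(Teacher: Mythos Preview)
Your overall skeleton matches the paper's: fix an input region of space dimension $i$, bound how many pieces the layer cuts it into, cap each piece's output space dimension at $\min(i,n_l)$ via Proposition~\ref{prop:sd}, and assemble into the diagonal-times-clipping histogram inequality. The difference is entirely in how you justify the per-region splitting bound $|\gamma_{i,c}|_1$, and there your argument has a gap.

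You propose to obtain $|\gamma_{i,c}|_1$ by realizing the $k$-maxout as an affine map, a single hidden ReLU layer in $\mathrm{RL}(n,c)$, and another affine map, and then invoking condition~(a) of Definition~\ref{def:gamma}. The $|t|=\sigma(t)+\sigma(-t)$ gadget does this for $k=2$, but for $k\ge 3$ the iterated $\max$ does not factor as affine--ReLU--affine with only $c$ hidden units; the ``short composition of ReLU layers'' you mention at the start is genuinely multi-layer, so the object you would apply the bound condition to is not an element of $\mathrm{RL}(n,c)$. The paper sidesteps this completely: it cites \citet{serra2018bounding} for the fact that a $k$-rank maxout with $n_l$ outputs subdivides each region by the arrangement of $\tfrac{k(k-1)}{2}n_l$ pairwise-equality hyperplanes $z_p=z_q$, applies the classical bound $\sum_{s=0}^{i}\binom{m}{s}$ on the number of cells of $m$ hyperplanes in dimension $i$, and proves a separate lemma showing that for the recursive $\gamma$ of Theorems~\ref{them:main}--\ref{them:init} one has $|\gamma_{i,m}|_1=\sum_{s=0}^{i}\binom{m}{s}$. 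Your route is repairable: drop the functional ReLU realization and simply observe that the linear pieces of the maxout layer are the cells of that hyperplane arrangement, which is the activation-pattern partition of \emph{some} $h\in\mathrm{RL}(i,m)$ with $m\le c$; then Definition~\ref{def:gamma}(a) together with monotonicity in the second index gives the bound directly, and you avoid both the extra lemma and the Zaslavsky count.
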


The proofs of Proposition \ref{prop:pooling}, Proposition \ref{prop:maxpooling} are in Appendix \ref{sec:proof-prop-pooling} and \ref{sec:proof-prop-maxpooling}.

\paragraph{Skip connection and residual structure}
The skip connection is very popular in current network architecture design.  When a network is equipped with a skip connection, the upper bound will be changed. The following Proposition gives the correspondent method to compute the upper bound with skip connection.

\begin{proposition}\label{prop:skipcomp}
Given a network, suppose that $\vv$ is the  space dimension histogram of the input regions of the $i^{\text{th}}$ layer and $\vw$ the histogram of the output regions of the $j^{\text{th}}$ layer. And we have that $\vw \preceq \prod^{j}_{k=i}\mA_k \vv$. When the input of the $i^{\text{th}}$ layer is concatenated to the $j^{\text{th}}$ layer, i.e. a skip connection, then the computation of $\vw$ will change and satisfies that $\vw \preceq \sum v_n\left|\mB \ve^{n}\right|_{1} \ve^{n}$ where $\mB = \prod^{j}_{k=i}\mA_k$.  It also has a matrix multiplication form.
\end{proposition}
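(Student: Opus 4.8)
The plan is to track, region-by-region, how a skip connection alters the propagation of the space dimension histogram. Without the skip connection, for a region $\sD$ whose activation pattern has already been fixed through layers $1,\dots,i-1$, the composition of layers $i$ through $j$ is an affine map on $\sD$, so by Proposition \ref{prop:sd} the space dimension of the image can only decrease (or stay the same) as we apply $\mA_i,\dots,\mA_j$ in turn; this is exactly what the stated inequality $\vw \preceq \prod_{k=i}^{j}\mA_k\,\vv$ encodes, with $\mB = \prod_{k=i}^{j}\mA_k$ the aggregate matrix acting on histograms. The key new observation is that when the input of the $i^{\text{th}}$ layer is concatenated to the $j^{\text{th}}$ layer, the output of layer $j$ on $\sD$ is no longer just $\mB$ applied to the layer-$i$ input region: it is the \emph{graph} of that affine map, i.e. the set $\{(\vx, g(\vx)) \mid \vx \in h_{i-1}(\sD)\}$ where $g$ is the affine transform realized by layers $i,\dots,j$.

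First I would make precise the dimension bookkeeping for a single region. Fix an activation pattern $\vs$ up through layer $i-1$ with $\mathrm{Sd}(h_{i-1}(\sD(\vs))) = n$, so this region contributes a count to the $n^{\text{th}}$ bin of $\vv$, i.e. to $v_n$. Under the non-skip dynamics its image after layer $j$ has space dimension at most $(\mB\,\ve^n)$-indexed value — more carefully, $\mB$ acts on the basis histogram $\ve^n$ and $|\mB\,\ve^n|_1$ records how a dimension-$n$ region is (at most) split/mapped. The point is: the map $\vx \mapsto (\vx, g(\vx))$ is injective and affine, hence a region of space dimension $n$ in the domain maps to a region of space dimension exactly $n$ in the concatenated output, regardless of how much $g$ itself collapses dimensions. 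So a single dimension-$n$ input region yields an output region that still sits in the $n^{\text{th}}$ bin; but there may be $|\mB\,\ve^n|_1$ many such pieces (the subdivisions induced by the intermediate ReLU layers on $h_{i-1}(\sD)$ persist in the graph). This gives the contribution $v_n \,|\mB\,\ve^n|_1 \,\ve^n$, and summing over $n$ yields $\vw \preceq \sum_n v_n\,|\mB\,\ve^n|_1\,\ve^n$.

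The main steps in order: (1) recall that concatenation of the identity block with $g$ produces an injective affine map and invoke Proposition \ref{prop:sd} twice — once to see space dimension is preserved under the injective graph embedding, once to see that each intermediate ReLU layer only refines the partition without raising dimension; (2) argue that the number of output pieces obtained from a fixed dimension-$n$ input region is bounded by $|\mB\,\ve^n|_1$, since $\mB$ already aggregates the histogram-level splitting effect of layers $i,\dots,j$ on a dimension-$n$ region; (3) assemble these per-bin bounds into the claimed histogram inequality and check that the relation $\preceq$ is respected (using that $\max$ and these operations preserve $\preceq$, as in Section \ref{sec:comparebound}); (4) observe that $\vv \mapsto \sum_n v_n\,|\mB\,\ve^n|_1\,\ve^n$ is linear in $\vv$ and hence given by left-multiplication by the diagonal-type matrix $\mathrm{diag}\{|\mB\,\ve^0|_1,\dots\}$ restricted/clipped appropriately, establishing the matrix-multiplication form so it slots into Eq.\ref{eq:matcomp}.

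I expect step (2) to be the main obstacle: one must be careful that $|\mB\,\ve^n|_1$ genuinely upper-bounds the number of output regions of space dimension $n$ arising from a dimension-$n$ input region \emph{after} the skip concatenation — the subtlety is that the skip connection could in principle let later layers "see" the preserved coordinates and carve finer, so I would need to verify that the ReLU hyperplanes introduced in layers $i{+}1,\dots,j$ act on the lower-dimensional image $g$-coordinates only (they are functions of $h$-values, which factor through $g$), so they cannot produce more pieces than the non-skip count already recorded in $\mB$. A secondary care point is the bin index: the concatenated region's dimension is $\min(n_0,\dots)$-type bounded by $n$ itself, not by $|\mB\,\ve^n|_1$'s support, so one must keep the dimension label at $n$ while letting the multiplicity be $|\mB\,\ve^n|_1$ — which is exactly the asymmetric role of the two quantities in the formula.
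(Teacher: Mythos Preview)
Your proposal is correct and follows essentially the same approach as the paper's proof: fix a single input region of space dimension $n$, bound the number of sub-pieces produced by layers $i,\dots,j$ by $|\mB\ve^n|_1$, observe that concatenation with the identity (your ``graph'' map, the paper's block matrix $\begin{bmatrix}\mC\\ \mI\end{bmatrix}$ of full rank) keeps each piece's space dimension at most $n$, deduce $\vw_R \preceq |\mB\ve^n|_1\,\ve^n$, sum over bins, and read off the diagonal matrix $\mathrm{diag}\{|\mB\ve^0|_1,|\mB\ve^1|_1,\dots\}$. Your worry in step~(2) is resolved exactly as you say---the intermediate ReLU hyperplanes act only on the non-skip coordinates, so the region count is unchanged---and the paper's proof takes this for granted rather than discussing it.
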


Residual structures \citep{he2016deep} are similar to skip connections and can be regarded as a skip connection plus a simple full-rank square matrix computation. Therefore its effect on space dimension histogram is the same as the skip connection. Concatenation is equivalent to addition in upper bound computation. Thus, we have the following proposition.

\begin{proposition}\label{prop:res}
Suppose that the residual structure adds the input of the $i^{\text{th}}$ layer to the output of the $j^{\text{th}}$ layer. $\vv, \vw, \mB$ have the same meaning in Proposition \ref{prop:skipcomp}.  Then we have that $\vw \preceq \sum v_n\left|\mB \ve^{n}\right|_{1} \ve^{n}$.
\end{proposition}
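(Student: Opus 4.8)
\textbf{Proof proposal for Proposition \ref{prop:res}.}

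The plan is to reduce the residual case directly to the skip-connection case already handled in Proposition \ref{prop:skipcomp}. A residual block that adds the input $h_{i-1}$ of the $i^{\text{th}}$ layer to the output $h_j$ of the $j^{\text{th}}$ layer computes $h_j + h_{i-1}$, which can be realized as: first concatenate $h_{i-1}$ to the layer-$j$ output (a skip connection, whose effect on the space-dimension histogram is exactly the bound $\vw \preceq \sum v_n |\mB\ve^n|_1 \ve^n$ from Proposition \ref{prop:skipcomp}), and then apply the linear map $(\vu,\vu') \mapsto \vu + \vu'$. So the first step is to observe that for a fixed activation pattern, both the trunk $\prod_{k=i}^{j}\mA_k$ path and the residual (identity) path are affine on each region $\sD(\vs)$; hence the composite map producing the residual output is affine on $\sD(\vs)$, and its image is governed by the same region-counting mechanism as the skip connection.

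Next I would argue that the final addition step does not increase the space-dimension histogram. Concatenation of $h_{i-1}$ (space dimension $\le n$, where $v_n$ counts such regions) with the trunk output already gives, on each region, an affine image of space dimension at most $|\mB\ve^n|_1$ summed appropriately — this is precisely what Proposition \ref{prop:skipcomp} packages into $\sum v_n |\mB\ve^n|_1 \ve^n$. Adding the two halves together is an affine transform $\vx \mapsto \mA\vx$ with $\mA = [\mI \;\; \mI]$, and by Proposition \ref{prop:sd} an affine transform can only decrease (never increase) the space dimension: $\mathrm{Sd}(\mA(\sD)) \le \mathrm{Sd}(\sD)$. Since the bound $\preceq$ is what Proposition \ref{prop:skipcomp} already certifies for the concatenated object, and since composing with the addition map preserves $\preceq$ (affine maps do not raise space dimension, and the $\max$/clipping operations used to build the histogram are monotone with respect to $\preceq$), we obtain $\vw \preceq \sum v_n |\mB\ve^n|_1 \ve^n$ for the residual output as well.

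Finally I would note the two caveats the proposition's preamble already flags: a residual block in the sense of \citet{he2016deep} typically carries an extra learned square transform on the trunk (or on the identity branch via a projection), but multiplying $\mB$ by a full-rank square matrix does not change any rank, hence does not change $|\mB\ve^n|_1$, so it can be absorbed into $\mB$ without affecting the bound; and "concatenation is equivalent to addition in upper bound computation" is exactly the reduction step used above. Packaging this, the matrix-multiplication form promised at the end of Proposition \ref{prop:skipcomp} transfers verbatim.

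The main obstacle I anticipate is being careful that the addition map $[\mI\;\;\mI]$ really does leave the \emph{histogram} bound intact and not merely the per-region space dimension: one must check that when regions are re-indexed after the addition, no two previously-distinct activation patterns can force the count upward, i.e. that the number of output regions is still bounded by the number of input regions times the per-layer factors encoded in $\mB$. This is where I would lean on the fact, established in the discussion preceding Proposition \ref{prop:matrix-computation}, that $|\sP(f)| \le |\sS(\vh)|$ and that each $\sH_{h_k}$ only refines the existing partition — the residual identity branch adds no new hyperplanes, so it contributes no new refinement, and the bound is inherited unchanged.
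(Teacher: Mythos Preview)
Your proposal is correct and follows essentially the same route as the paper: decompose the residual map as a skip connection (concatenation) followed by the linear addition $[\mI\;\;\mI]$, invoke Proposition~\ref{prop:skipcomp} for the first step, and then argue that the addition cannot worsen the histogram bound (the paper records this as $\mathrm{rank}([\mI\;\;\mI])=m\geq\mathrm{Sd}(r)$, so by Proposition~\ref{prop:sd} the induced clipping is vacuous). Your closing worry about region re-indexing is unnecessary, since the addition map introduces no new hyperplanes and hence no further refinement of the partition.
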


The proofs of Proposition \ref{prop:skipcomp} and Proposition \ref{prop:res} are in Appendix \ref{sec:proof-prop-skipcomp} and \ref{sec:proof-prop-res}. In addition, the analysis for dense connections \citep{Huang_2017_CVPR} is similar to skip connections and is further discussed in Section \ref{discussion}.

With the above analysis, we can deal with more complex network. For example, a U-net \citep{ronneberger2015u} is composed of convolutional layers, pooling layers, unpooling layers and skip connections. If we regard convolutional layers as fully-connected layers, then we could compute the upper bound of it (see details in Appendix \ref{sec:exmp-B-compare}).  Though in this paper we have not extend to all possible network structures and architecture, according to Proposition \ref{prop:sd} and by using our computational framework, it is possible to compute the change of the space dimension histogram for any other network structure and architecture. Based on these bricks of different layers or structures, we may derive the upper bound of much more complex and practically used networks.

\section{Experiments}
\label{experiments}
We perform two experiments in this work. The first one is to compare the bound proposed by us and by \citet{serra2018bounding} which is illustrated by Figure \ref{fig:compare}. The upper bounds of MLPs computed and their ratio is calculated to measure the difference. Figure\ref{fig:compare1} and Figure \ref{fig:compare2} show how the ratio changes as the number of hidden layers increases from $1$ to $10$ with different $n_0$. Figure \ref{fig:compare3} shows how the ratio changes when the depth of the network becomes larger.

The second one is to verify the effectiveness of skip connections and residual structures on the improvement of expressiveness, as illustrated in Table \ref{tab:skip} and Table \ref{tab:res}. We compute the upper bound of auto-encoders (AE) and U-nets. AEs and correspondent U-nets have the same network architecture except the skip connection. As for residual structures, we build two identical networks for the image classification task with or without residual structure. For each pair of networks (with or without one special structure), the ratio of their bounds are computed to measure how the upper bound will be enhanced with special structure. Different network architecture settings are tried in experiments. Because of the large memory required by complete $\gamma_{n, n^{\prime}}$, we only consider networks of relatively small sizes (network input size is $24\times 24$ or $16\times 16$).  In addition, convolutional layers are regarded as fully-connected layers in the computation. 

\begin{figure}
\centering
\subfigure[$n_0 = 10$]{
\begin{minipage}[t]{0.31\linewidth}
\centering
\includegraphics[width=1\linewidth]{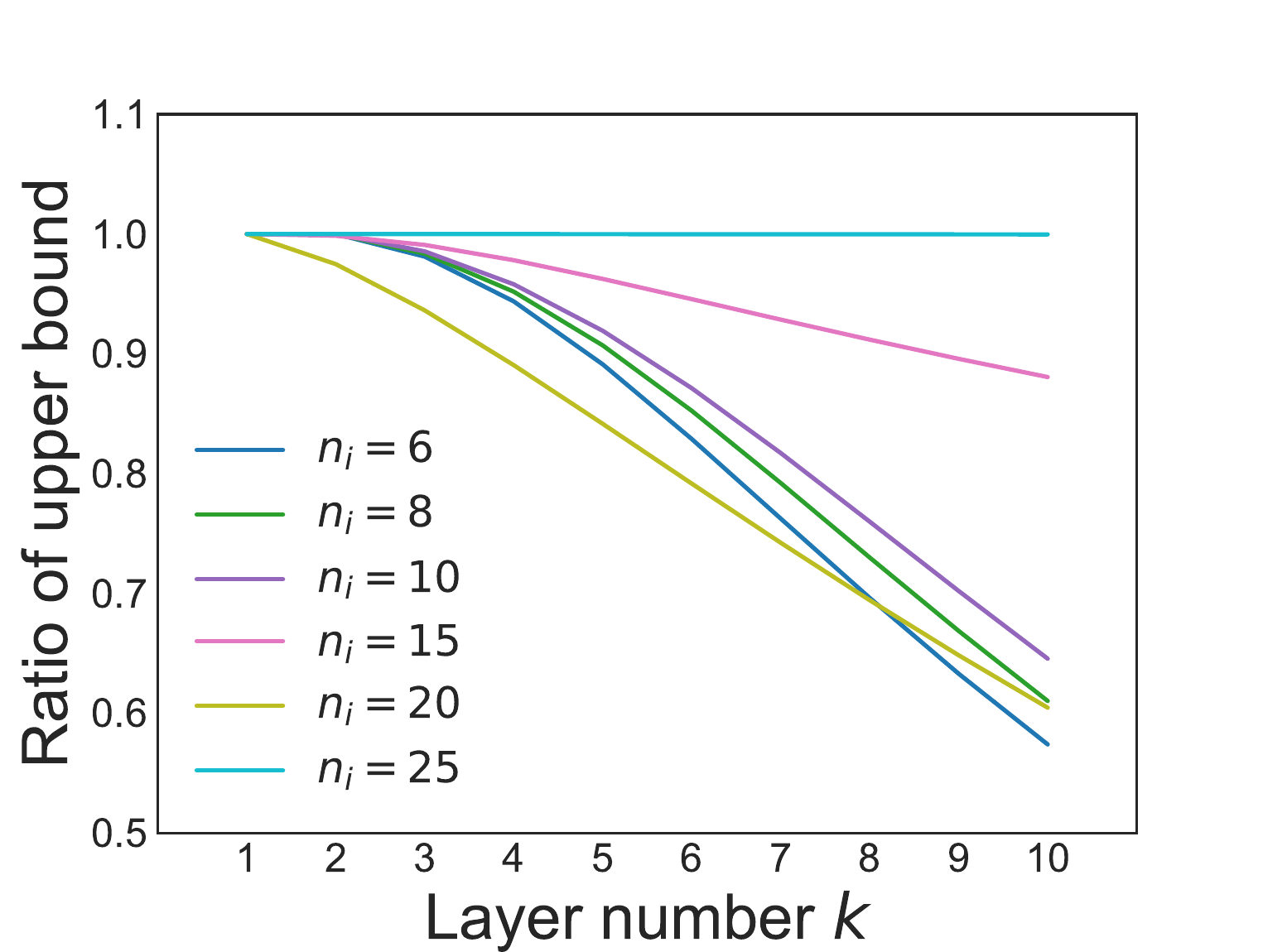}
\label{fig:compare1}
\end{minipage}%
}
\subfigure[$n_0 = 100$]{
\begin{minipage}[t]{0.32\linewidth}
\centering
\includegraphics[width=1\linewidth]{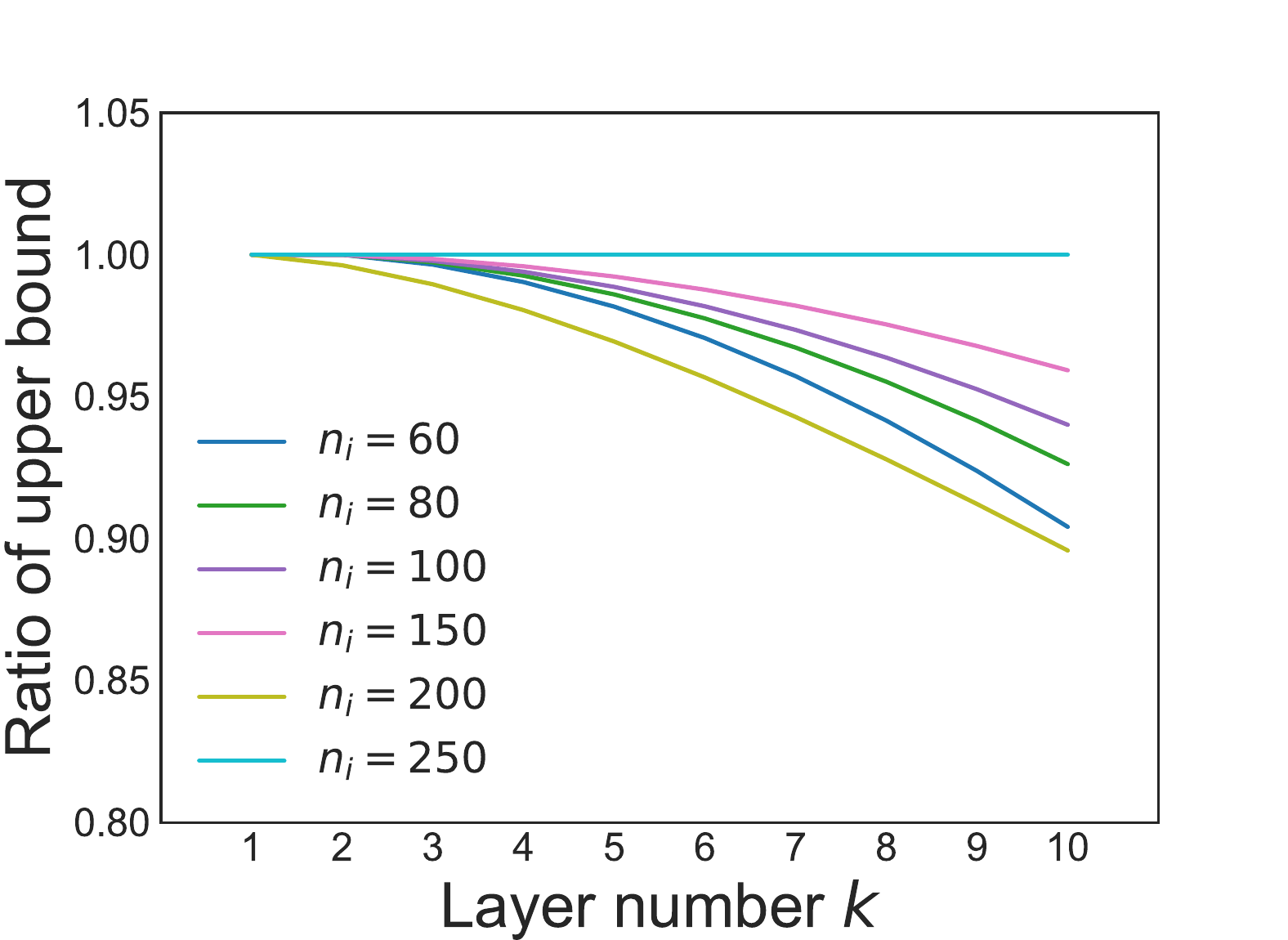}
\label{fig:compare2}
\end{minipage}%
}
\subfigure[$n_0 = 10, n_i = 10$]{
\begin{minipage}[t]{0.32\linewidth}
\centering
\includegraphics[width=1\linewidth]{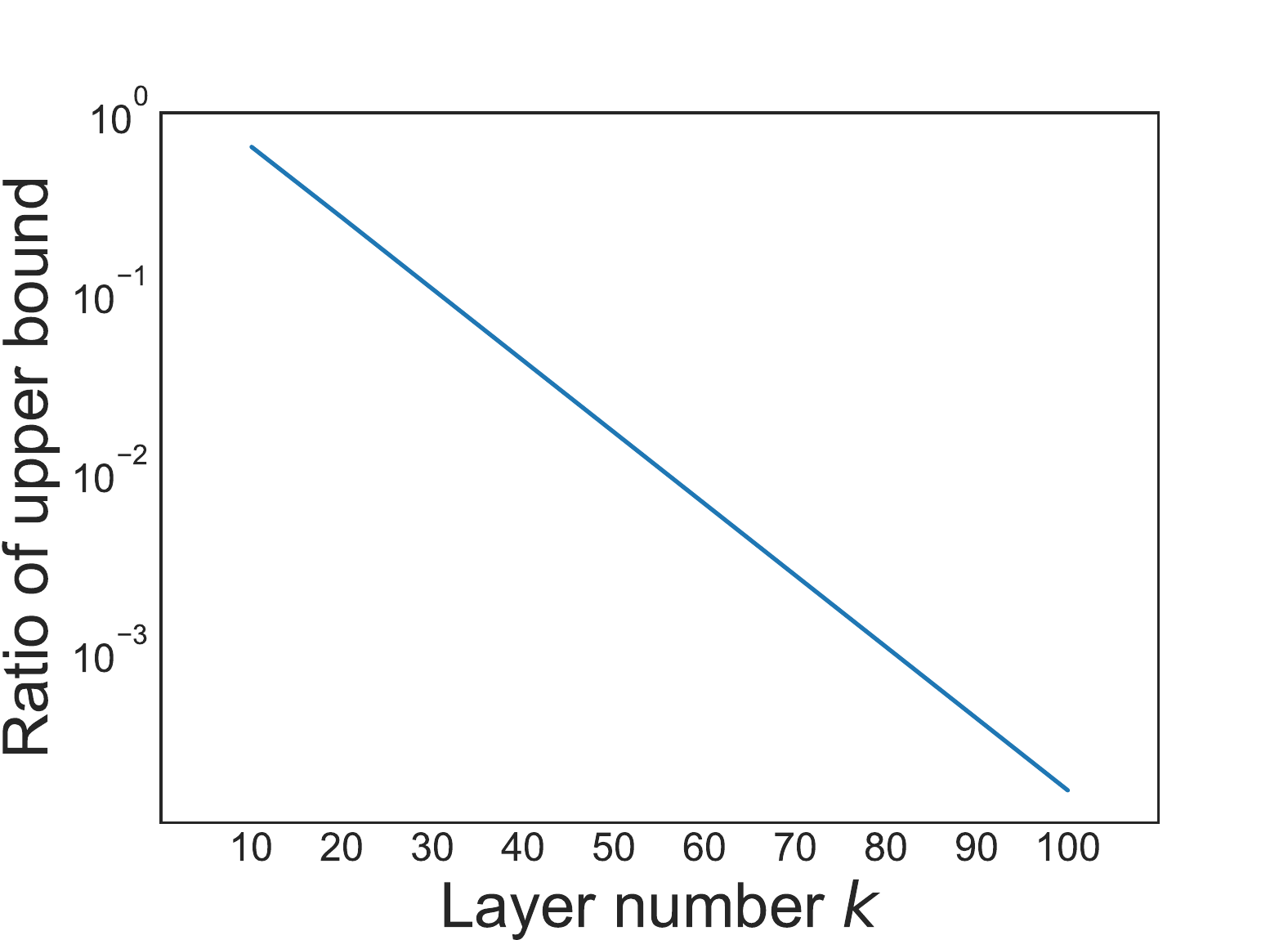}
\label{fig:compare3}
\end{minipage}
}
\centering
\caption{The comparison of our upper bound and \citet{serra2018bounding}. The $y$ axis represents the ratio of them which can be used to measure the difference. The MPLs are in the form of $n_0$-$n_i$-$n_i$-$\cdots$-$n_i$-$1$ with $k$ hidden layers. (a) Setting is $n_0 = 10, n_i = 6, 8, 10, 15, 20, 25, k = 1, 2, ..., 10$. (b)  Setting is $n_0 = 10, n_i = 6, 8, 10, 15, 20, 25, k = 1, 2, ..., 10$. (c) The setting is $n_0 = n_i = 10, k = 10, 20, ..., 100$.}
\label{fig:compare}
\end{figure}

\begin{table}[t]
	\caption{The upper bound of AEs and U-nets. The channels setting means the architecture of networks.  For instance, $4$-$8$-$16$-$32$ represents an AE which encoder has three down-sampling layers and the channels in different layers are $4$, $8$, $16$, $32$ respectively (see details in Appendix \ref{sec:network-skip}). The input size in the first four experiments is $(24\times24)$ while the last two is $(16\times 16)$. The upper bounds are listed in the third and fourth column. The former corresponds to U-nets while the latter to AEs. Besides, the ratios of two them are listed in the last column.}
    \label{tab:skip}
  	\begin{center}
    \begin{tabular}{ccccc} 
    	\hline
		No. & Channels setting & w/ skip connection & w/o skip connection & ratio \\ 
        \hline
         1 & 4-8-16-32 & $5.431\times 10^{2031}$& $5.027\times 10^{1626}$ & $1.080\times 10^{405}$\\
         2 & 4-16-64-128 & $1.712\times 10^{3111}$ & $3.333\times 10^{2956}$ & $3.517\times 10^{154}$\\
     	\hline
         3 & 4-8-16 & $9.354\times 10^{1901}$ & $1.407\times 10^{1643}$ & $6.648\times 10^{258}$\\
         4 & 4-16-64 & $2.211\times 10^{2591}$ & $3.328\times 10^{2450}$ & $6.644\times 10^{140}$\\
 		\hline
         5 & 8-16-64 & $3.504 \times 10^{1395}$ & $5.470 \times 10^{1330}$  & $6.406 \times 10^{64}$ \\
        6 & 8-32-128 & $1.268\times 10^{1642}$ & $1.230 \times 10^{1642}$ & $1.031$\\
		\hline
  	\end{tabular}
  	\end{center}
\end{table}

\begin{table}[t]
	\caption{The upper bound of a simple network for the classification task with or without residual structure. The channels setting represents where main differences in architectures are (see details in Appendix \ref{sec:network-res}). "p16" means that there is a pooling layer before the convolutional layer with 16 channels. And "r16" represents that a residual structure is added in the convolutional layer with $16$ channels. The last three columns in this table is similar to Table \ref{tab:skip}. In this part, the input size in all the experiments is $(24 \times 24)$.}
    \label{tab:res}
  	\begin{center}
    \begin{tabular}{ccccc} 
    	\hline
		No. & Channels setting & w/ res-structure & w/o res-structure & ratio \\
    	\hline
		1 & 4-p16-p16-r16-r16-r16& $7.313 \times 10^{1930}$ & $3.720 \times 10^{1930}$ & $1.966$\\
        2 & 4-p4-4-r4-r4-r4 & $5.052 \times 10^{1542}$ & $3.393 \times 10^{1542}$ & $1.489$\\
        3 & 4-p16-p16-r16-16-16 & $5.232\times 10^{1930}$ & $3.720 \times 10^{1930}$ & $1.406$\\
        4 & 4-p16-p16-r16-r16-r16-r16-r16 & $3.012 \times 10^{2277}$ & $5.882 \times 10^{2276}$ & $5.121$\\
        5 & 4-p16-p32-r32-r32-r32 & $7.180 \times 10^{2623}$ & $7.166\times 10^{2623}$ & $1.002$\\
		\hline
  	\end{tabular}
  	\end{center}
\end{table}

\section{Discussion}
\label{discussion}
The proposed upper bound has been theoretically proved to be tighter than \citet{serra2018bounding}. Furthermore, Figure \ref{fig:compare1} and Figure \ref{fig:compare2} show that when $n_i$ increases from $0.6n_0$ to $1.5n_0$, the ratio curve of two bounds moves upward. However, when $n_i$ increases to $2n_0$, the curve moves downward. And when $n_i$ increases further and is much larger than $n_0$, the two bounds are the same. This trend is related to the extent of similarity in $\mB^{(\text{ours})}_{n}$  and $\mB^{(\text{serra})}_n$.  When $n_i < n_0$, $\mB_{n_i}$ affects the upper bound most. And the difference between $\mB^{(\text{ours})}_{n_i}$  and $\mB^{(\text{serra})}_{n_i}$ is more significant when $n_i$ decreases. When $n_i > n_0$, $(\mB_{n})_{:, n_0+1}$, the $(n_0+1)^{\text{th}}$ column of $B_n$, is the main source of difference. It is easy to find that the middle part of $\mB^{(\text{ours})}_{n}$  and $\mB^{(\text{serra})}_n$ differ most while the left and right part are similar (see detials in Appendix \ref{sec:exmp-B-compare}). As for the Figure \ref{fig:compare3}, it shows the difference between two upper bounds will be enlarged when the depth of networks increases. In general, both theoretical analysis and experimental results show our bound is tighter than \citet{serra2018bounding}.

The result in Table \ref{tab:skip}, Table \ref{tab:res} shows that special structure such as skip connection and residual structure will enhance the upper bound. And when the residual structure is used in more layers, the enhancement will increase (comparing comparing No.1, 3 and 4 in Table \ref{tab:res}). However, it seems that when the number of channels is larger, the enhancement is weaker (comparing No.1 to 2, 3 to 4, and No.5 to No.6 in Tabel \ref{tab:skip}). Similar result can be observed in Table \ref{tab:res} (comparing No.1 to No.5). These results explain why skip connection and residual structure are effective from the perspective of expressiveness. However, there also exist some settings such that the two upper bounds are close.

We provide an explanation for the observation through the concept of partition efficiency. First of all, we note that it is not the upper bound but the practical number of linear regions that is directly related to the network expressiveness. Though upper bound measures the expressiveness in some extent, there is a gap between upper bound and the real practical maximum. Actually, any upper bound which can be computed by the framework of matrix computation, including \citet{pascanu2013number, montufar2014number, serra2018bounding} etc., assume that all the output regions of the current layer will be further divided at the same time by all the hyperplanes which the next layer implies. However, this assumption is not always sound (see detail in Appendix \ref{sec:exmp-imperfect-partition}). When the number of regions far exceeds the dimension of hyperplanes, it is hard to imagine that one hyperplane can partition all the regions. But the special structure (e.g. skip connection) may increase the number of regions partitioned by one hyperlane such that the practical number can be increased even if the upper bound is not enhanced. Intuitively, skip connections increase the dimension of input space and the partition efficiency may be higher. An good example is dense connections \citep{Huang_2017_CVPR} which is  only a special case of skip connection but can further increase the dimension and thus lead to better expressiveness; this may explain the success of DenseNets. Though we have not completely proved the partition efficiency of skip connections, some evidences seem to confirm our intuition, as shown in Appendix \ref{sec:proof-prop-partition-efficiency} and \ref{sec:proof-prop:3layernet}.

\section{Conclusion}

In this paper, we study the expressiveness of ReLU DNNs using the upper bound of number of regions that are partitioned by the network in input space. We then provide a tighter upper bound than previous work and propose a computational framework to compute the upper bounds of more complex and practically more useful network structures. Our experiments verify that special network structures (e.g. skip connection and residue structure) can enhance the upper bound. Our work reveals that the number of linear regions is a good measurement of expressiveness, and it may guide us to design more efficient new network architectures since our computational framework is able to practically compute its upper bound.

\bibliography{main}
\bibliographystyle{iclr2021_conference}

\appendix
\section{Appendix}
\subsection{Proofs}
\subsubsection{The proof of Proposition \ref{prop:sd}}\label{sec:proof-prop-sd}
\begin{proof}
By Definition 6, suppose $\{\vv^{(1)}, \vv^{(2)}, \dots, \vv^{(k)}\}$ and $\vc$ can be used to represent any $\vx \in \sD$ and $\{a_i\neq 0\}$ satisfies that $a_i \{\vv^{(i)} + \vc \in \sD$. Let $\sD^{\prime} = \sD - \vc$, then $\sD^{\prime}$ is a translation of $\sD$. Since $\sD$ is convex, $\sD^{\prime}$ and $f(\sD^{\prime})$ is also convex. Because $f(\sD) = f(\sD^{\prime} + \vc)= \mA\sD^{\prime} + \mA\vc + \vb$, then $\mathrm{Sd}(f(\sD)) = \mathrm{Sd}(\mA\sD^\prime)$. Suppose that $\sD^{\prime\prime} = \mA(\sD^{\prime}) = \{\mA\vx| \vx\in R^{\prime}\}$. For any $\vy \in \sD^{\prime\prime}$, there exists $\vx \in \sD^{\prime}$, s.t. $\vy = \mA\vx$. Denote $\mA\vv^{(i)}$ by $\vw^{(i)}$, then $\vy$ can be represented by $\{\vw^{(1)}, \vw^{(2)}, \dots, \vw^{(k)}\}$ and $a_i\vw^{(i)} \in \sD^{\prime\prime}$. From $\{\vw^{(1)}, \vw^{(2)}, \dots, \vw^{(k)}\}$ choose a set of vectors which are linear independent and can linearly represent $\{\vw^{(1)}, \vw^{(2)}, \dots, \vw^{(k)}\}$. For convenience, suppose they are $\{\vw^{(1)}, \vw^{(2)}, \dots, \vw^{(t)}\}$, $t\leq k$. Therefore any $\vy \in \sD^{\prime\prime}$ can be represented by $\{\vw^{(1)}, \vw^{(2)}, \dots, \vw^{(t)}\}$. Thus, $\mathrm{Sd}(\sD^{\prime\prime}) = \mathrm{Sd}(f(\sD)) = t$ and we have that
\begin{align*}
t &= \mathrm{rank}(\left[\vw^{(1)}\ \dots \ \vw^{(t)}\right])\\
&=\mathrm{rank}(\left[\vw^{(1)}\ \dots \  \vw^{(k)}\right])   \\
& = \mathrm{rank}(\mA\left[\vv^{(1)}\ \dots \ \vv^{(k)}\right])  \\
& \leq \min\{\mathrm{rank}(\mA), \mathrm{rank}(\left[\vv^{(1)}\ \dots \ \vv^{(k)}\right])\}\\
& = \min\{\mathrm{rank}(\mA), k\}.
\end{align*}
The last equality is derived from that $\{\vv^{(1)}, \vv^{(2)}, \dots,  \vv^{(k)}\}$ are $k$ linear independent vectors.
\end{proof}

\subsubsection{The proof of Proposition \ref{prop:sd-d}}\label{sec:proof-prop-sd-d}
\begin{proof}
For any $\vs \in \sS(\vh)$, suppose $\sD(\vs) = \{\vx |\boldsymbol{S}_{\vh}(\vx)=\vs\}$. As long as $\mathrm{Sd}\left(\vh \left(\sD \left(\vs\right)\right)\right) \leq \min\{n_0, \left|\vs_{h_1}\right|_1, \left|\vs_{h_2}\right|_1, ..., \left|\boldsymbol{s}_{h_{L-1}}\right|_1\}$ is established, the proposition can be proved. Apparently, $\mathrm{Sd}(\sD(\vs)) = n_0$. Through the analysis in Section 2, $\vh(\cdot)$ in $\sD(\vs)$ is an affine transform. For any $\vx^{(1)}, \vx^{(2)} \in \sD(\vs)$, $\mW^{(i)}(\vx^{(1)})=\mW^{(i)}(\vx^{(2)}), \vb^{(i)}(\vx^{(1)}) =\vb^{(i)}(\vx^{(2)})$. Therefore we use $\vx^{(0)}\in {\sD(\vs)} $ to represent them. Then $\vh(\cdot)$ in $\sD(\vs)$ can be written as
\begin{align*}
\vh(\vx) &=\mW^{(L-1)}(\vx^{(0)})(\dots \mW^{(1)}(\vx^{(0)})\vx + \vb^{(1)}(\vx^{(0)})) + \vb^{(L-1)}(\vx^{(0)})\\
& = \mW(\vx^{(0)})\vx + \vb(\vx^{(0)})
\end{align*}
By Proposition 1, we have 
\begin{align*}
\mathrm{Sd}\left(\vh\left(\sD\left(\vs\right)\right)\right) &\leq \min\left\{\mathrm{rank}(\mW(\vx^{(0)})), n_0\right\} \\
& = \min\left\{n_0, \mathrm{rank}\left(\prod^{L-1}_{i = 1}\mW^{(i)}(\vx^{(0)})\right)\right\} \\
& \leq \min \left\{n_0, \min\left\{\mathrm{rank}\left(\mW^{(1)}(\vx^{(0)})\right), \dots , \mathrm{rank}\left(\mW^{(L-1)}(\vx^{(0)}) \right)\right\}  \right\}\\
& \leq \min\{n_0, \left|\vs_{h_1}\right|_1, \left|\vs_{h_2}\right|_1, ..., \left|\vs_{h_{L-1}}\right|_1\}
\end{align*}
The last two inequality are derived from that
\begin{equation*}
\mathrm{rank}\left(\prod^{L-1}_{i = 1}\mW^{(i)}(\vx^{(0)})\right) \leq \min\{\mathrm{rank}(\mW^{(1)}(\vx^{(0)})), ..., \mW^{(L-1)}(\vx^{(0)}))\}
\end{equation*}
and $\mathrm{rank}(\mW^{(i)}(\vx^{(0)})) \leq \left|\vs_{h_i}\right|_1$. 
\end{proof}

\subsubsection{The proof of Theorem \ref{them:main}}\label{sec:proof-them-main}
\begin{proof}
Obviously,  $\gamma_{n,n^{\prime}}$ satisfies the second condition. As for the first condition, consider that hyperplanes $\{ L_1,L_2,...,L_{n^{\prime}}\}$ divide $\mathbb{R}^n$. These hyperlanes correspond to one $h \in \mathrm{RL}(n, n^{\prime})$ and its activation histogram is denoted by $\vv_1$. Here, we assume that $L_1, L_2, \dots, L_{n^{\prime}}$ are not parallel to each other since in parallel case it is easy to imagine and verify that there exists a $h^{\prime} \in \mathrm{RL}(n, n^{\prime})$ which satisfies $\mathcal{H}_{\mathrm{a}}(\sS_h) \preceq \mathcal{H}_{\mathrm{a}}(\sS_{h^{\prime}})$. Suppose that  hyperplanes $\{ L_1,L_2,\dots,L_{n^{\prime}-1}\}$ divide $\mathbb{R}^n$ into $t$ regions $\{R_1,R_2,...,R_t\}$ and their activation histogram is denoted by $\vv_2$. Assume that $L_{n^{\prime}}$ crosses $\{R_1, R_2,...,R_p\}, p \le t$ and divides them into $2p$ regions. Let $\vv_3$ denote the activation histogram of $\{R_1, R_2, \ldots, R_p\}$. $p$ regions of them are active by $L_{n^{\prime}}$ while other $p$ regions are not. Part of $\{R_{p+1},R_{p+2},...,R_t\}$ are active by $L_{n^{\prime}}$ while the rest regions are not. Their activation histogram are denoted as $\vv_4$ and $\vv_5$ respectively. Then we have the following equations:
\begin{align}\label{eq:relation1}
\vv_2 &= \vv_3 + \vv_4 +\vv_5\\
\label{eq:relation2}\vv_1 &= \vv_3 + \mathrm{dm}(\vv_3) + \mathrm{dm}(\vv_4) + \vv_5
\end{align}

Let us only focus on $\{R_1.R_2,...,R_p\}$. Suppose $\{ L_1,L_2,...,L_m\}(m \le n^{\prime}-1)$ are borders of these regions. $\{L_{n^{\prime}} \cap L_1, L_{n^{\prime}} \cap L_2,...,L_{n^{\prime}} \cap L_m\}$ are hyperplanes in $L_{n'}$, and their active directions are projections of $\{ L_1,L_2,...,L_m\}$ in $\mathbb{R}^n$. So the activation histogram of $\{L_{n^{\prime}} \cap L_1, L_{n^{\prime}} \cap L_2,...,L_{n^{\prime}} \cap L_m\}$ in $L_{n^{\prime}}$ which is denoted by $\boldsymbol{v}_6$ is equal to $\vv_3$. By the assumption that when $n^{\prime\prime} < n^{\prime}$, $\gamma_{n,n^{\prime\prime}}$ satisfies the bound condition, we have
\begin{equation}\label{eq:v3equalv6}
\vv_3 = \vv_6 \preceq \gamma_{n-1,m} \preceq \gamma_{n-1,n^{\prime}-1}.
\end{equation}
By Eq.\ref{eq:relation1}, Eq.\ref{eq:relation2} and Eq.\ref{eq:v3equalv6}, $\vv_1$ satisfies
\begin{equation}
\begin{aligned}
    \vv_1 &\preceq \gamma_{n-1,n^{\prime}-1} + \mathrm{dm}(\vv_3+\vv_4+\vv_5) \\& = \gamma_{n-1,n^{\prime}-1} + \mathrm{dm}(\vv_2) \\&\preceq \gamma_{n-1,n^{\prime}-1} + \mathrm{dm}(\gamma_{n,n^{\prime}-1}) 
\end{aligned}
\end{equation}
Because of the arbitrariness of $\vv_1$, we can get $\gamma_{n, n^{\prime}}=\gamma_{n-1,n^{\prime}-1} + \mathrm{dm}(\gamma_{n,n^{\prime}-1}) $ which satisfies the first condition.
\end{proof}

\subsubsection{The proof of Theorem \ref{them:init}}\label{sec:proof-them-init}
\begin{proof}
For 1-dimension space, its hyperlane is one point in the number axis and the activation direction is either left or right. It is apparent that $n$ segmentation points divide one line into $n+1$ parts. And for any $x\in \mathbb{R}$, its activation number is at most $n$. Therefore if $t < \lceil \frac{n}2 \rceil$, then $\sum^n_{i=t} v_i \le n+1 \le 2(n-t)+1$. If $t \ge \lceil \frac{n}2 \rceil$, the activation pattern of $n+1$ regions in $\mathbb{R}$ is denoted by $\{ \vs_1,\vs_2,\dots,\vs_{n+1}\}$ (see Figure \ref{fig:proof1}). No matter how the activation directions of $n$ points are, we have $\lvert \vs_1 \rvert + \lvert \vs_{n+1} \rvert = n$ since $|\vs_1|$ is equal to the number of left activation directions and $|\vs_{n+1}|$ is equal to the number of right activation direction. Another obvious conclusion is that $\lvert \vs_i \rvert-\lvert \vs_{i+1} \rvert = \pm 1$ as the activation patterns of neighboring regions only differ at the $i^{\text{th}}$ point. Without loss of generality, let $\lvert \vs_1 \rvert \le \lceil \frac{n}2 \rceil$. Any line chart of active numbers with dotted line is under the line chart with solid line (see Figure \ref{fig:proof2}). So regions with active numbers larger than or equal to $t$ are among $a^{th}$ and $b^{th}$ region. Apparently, the number of regions among  $a^{th}$ and $b^{th}$ region is $2(n-t) + 1$. So $\sum^n_{i=t} v_i \le 2(n-t)+1$.
\end{proof}

\begin{figure}
\centering
\subfigure[Number axis]{
\begin{minipage}{0.45\linewidth}
\centering
\includegraphics[width=0.9\linewidth]{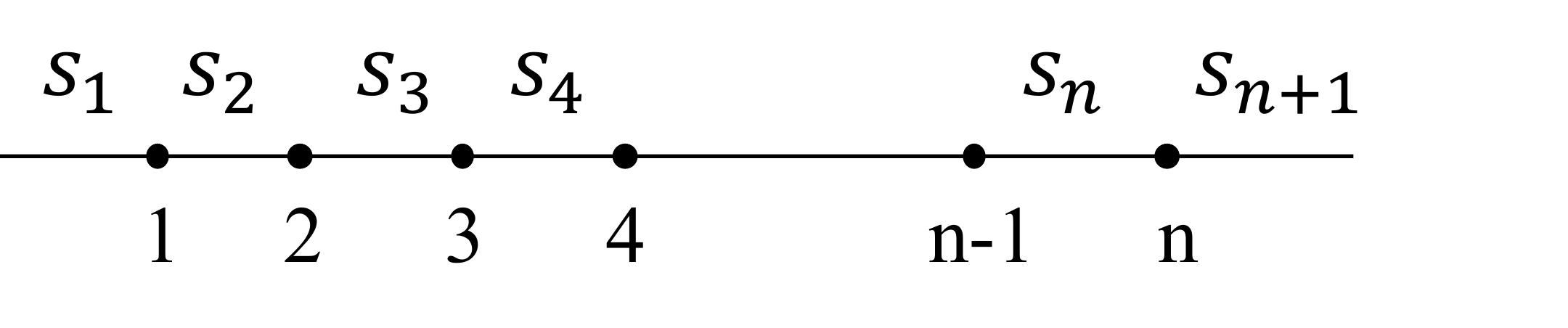}
\label{fig:proof1}
\end{minipage}%
}
\subfigure[Activation number of each region]{
\begin{minipage}{0.45\linewidth}
\centering
\includegraphics[width=0.9\linewidth]{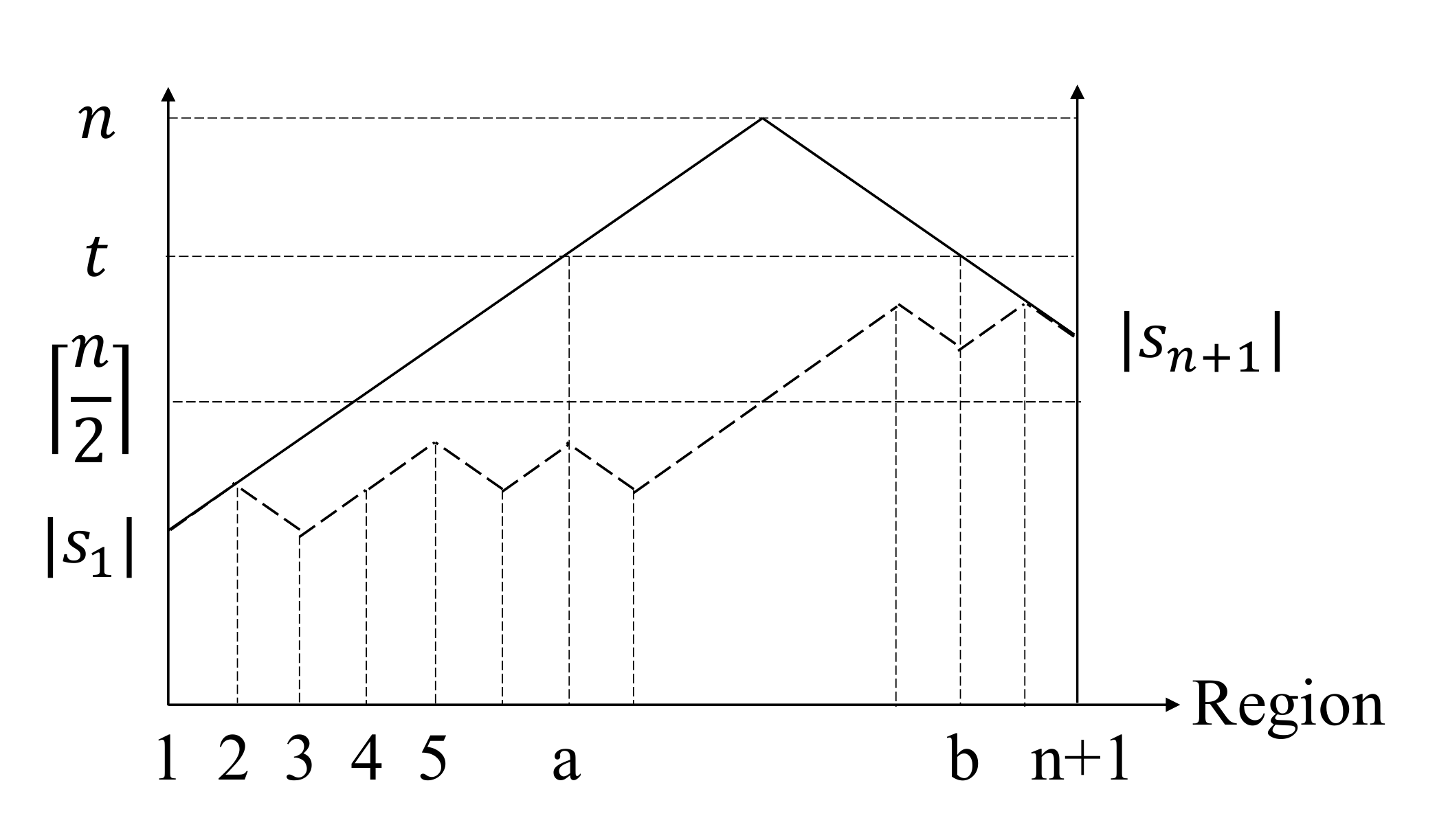}
\label{fig:proof2}
\end{minipage}%
}
\caption{(a) The number axis is partitioned into $n+1$ parts. $\vs_i$ represents the $i^{\text{th}}$ region. (b). An example of a partition and the activation number of each region. The abscissa axis corresponds to the position and the vertical axis represents the number of activation}
\label{fig:proof}
\end{figure}

\subsubsection{The proof of Eq.\ref{eq:gamma1n}}\label{proof:eq:gamma1n}
\begin{proof}
Suppose the $n$ points of $h$ in the number axis are $p_1, p_2, ..., p_n$. Let the direction sof $p_1, ..., p_{\lfloor \frac{n}{2}\rfloor}$are right and other points left. It is easy to verify that $\mathcal{H}_\text{a}(\sS_h) = \gamma_{1, n}$ where $\gamma_{1,n}$ is defined by 
\begin{equation*}
\gamma_{1, n}=\underbrace{(0, \dots, 0}_{\left\lceil\frac{n}{2}\right\rceil-1}, n \bmod 2, \underbrace{2, \dots, 2}_{\left\lfloor\frac{n}{2}\right\rfloor}, 1)^{\top}
\end{equation*}Then $\gamma_{1, n} \preceq \max \left\{\mathcal{H}_{\mathrm{a}}\left(\sS_{h}\right) | h \in \operatorname{RL}(1, n)\right\} $. Since $\max \left\{\mathcal{H}_{\mathrm{a}}\left(\sS_{h}\right) | h \in \operatorname{RL}(1, n)\right\} \preceq \gamma_{1, n} $, $\gamma_{1,n} =\max \left\{\mathcal{H}_{\mathrm{a}}\left(\sS_{h}\right) | h \in \operatorname{RL}(1, n)\right\}$.
\end{proof}

\subsubsection{The proof of Theorem \ref{thm:compare}}\label{sec:proof-thm-compare}
\begin{proof}
Different $\mB_{n^{\prime}}^{\gamma}$ can be derived from different $\gamma_{n, n^{\prime}}$. Since the clipping function keep the order relation $\preceq$, every column of $\mB^{\gamma^{(1)}}_{n^{\prime}}$ and $B^{\gamma^{(2)}}_{n^{\prime}}$in the same position satisfy that 
$$
\left(\mB^{\gamma^{(1)}}_{n^{\prime}}\right)_{:, j} \preceq \left(\mB^{\gamma^{(2)}}_{n^{\prime}}\right)_{:, j}
$$
Another fact is that when $\vv \preceq \vw$, then $\left|\vv \right|_1 \leq \left|\vw \right|_1$. Here we prove that if $\vv \preceq \vw$, then
\begin{equation}\label{eq:thm3-m}
\mM_{n_{i}, n_{i+1}} \vv \preceq \mM_{n_{i}, n_{i+1}} \vw
\end{equation}
and 
\begin{equation}\label{eq:thm3-b}
\mB^{\gamma^{(1)}}_{n_i}\vv \preceq \mB^{\gamma^{(2)}}_{n_i} \vw
\end{equation}
Then the theorem can derived easily from Eq.\ref{eq:thm3-m} and Eq.\ref{eq:thm3-b}. 

Because $\mM_{n_{i}, n_{i+1}} \boldsymbol{v} = \mathrm{cl}_{n_{i+1}}(\boldsymbol{v})$,   Eq.\ref{eq:thm3-m} is easy to verified. For $\mB^{\gamma^{(k)}}_{n_i}, k=1, 2$, we have that $\left(\mB^{\gamma^{(k)}}_{n_i}\right)_{:, j_1} \preceq \left(\mB^{\gamma^{(k)}}_{n_i}\right)_{:, j_2}$ when $j_1 \leq j_2$ because of the second condition in Definition 10 and the property of the clipping function. For convenience, $\left(B^{\gamma^{(k)}}_{n_i}\right)_{m,j}$ is denoted by $B^{(k)}_{m,j}$, $k = 1, 2$. By Definition 5, we have that
\begin{align*}
& \sum^{\infty }_{m=M} \left(\mB^{(1)}\vv\right)_{m} \leq \sum^{\infty }_{m=M} \left(\mB^{(2)}\vw\right)_{m} \\
& \Longleftrightarrow \sum^{\infty }_{m=M} \left(\sum^{\infty }_{j=0}B^{(1)}_{m, j}v_j\right)_{m} \leq \sum^{\infty }_{m=M} \left(\sum^{\infty }_{j=0}B^{(2)}_{m, j}w_j\right)_{m} \\ 
& \Longleftrightarrow \sum^{\infty }_{j=0} \left(\sum^{\infty }_{m=M}B^{(1)}_{m, j} \right)v_j \leq \sum^{\infty }_{j=0} \left(\sum^{\infty }_{m=M}B^{(2)}_{m, j} \right)w_j 
\end{align*}

Because $\mB^{(1)}_{:, j} \preceq \mB^{(2)}_{:, j}$, i.e. $\forall M \geq 0, \sum^{\infty }_{m=M}B^{(1)}_{m, j} \leq \sum^{\infty }_{m=M}B^{(2)}_{m, j} $. Let $a_j = \sum^{\infty }_{m=M}B^{(1)}_{m, j}, b_j = \sum^{\infty }_{m=M}B^{(2)}_{m, j}$, then $a_j \leq b_j$. Because $\mB^{(k)}_{:, j_1} \preceq \mB^{(k)}_{:, j_2}$ when $j_1 < j_2$, then 
\begin{equation}
a_0 \leq a_1 \leq a_2 \leq ...
\end{equation}
and 
\begin{equation}\label{eq:b}
b_0\leq b_1 \leq b_2 \leq ...
\end{equation}
By the notations, we have that
 \begin{align*}
& \sum^{\infty }_{j=0} \left(\sum^{\infty }_{m=M}B^{(1)}_{m, j} \right)v_j \leq \sum^{\infty }_{j=0} \left(\sum^{\infty }_{m=M}B^{(2)}_{m, j} \right)w_j  \\
 \Longleftrightarrow & \sum^{\infty }_{j=0}a_j v_j \leq \sum^{\infty }_{j=0}b_j w_j \\
 \Longleftrightarrow  & \sum^{n_i}_{j=0}a_j v_j \leq \sum^{n_i}_{j=0}b_j w_j
\end{align*}

The last equivalence is derived from that $v_j = w_j = 0$ when $j > n_i$. Consider the left part of last inequality. Employing $\vv \preceq \vw \Leftrightarrow \sum_{j=J}^{n_i} v_j \leq \sum_{j=J}^{n_i} w_j $ and Eq.\ref{eq:b}, the following inequality can be derived

\begin{align*}
& \ \ \ \ \sum^{n_i}_{j=0}a_j v_j \leq \sum^{n_i}_{j=0}b_j v_j = b_0v_0 + \sum^{n_i}_{j=1}b_j v_j \leq b_0 \left(\sum^{n_i}_{j=0}w_j - \sum^{n_i}_{j=1}v_j \right) + \sum^{n_i}_{j=1}b_j v_j \\
& \leq b_0w_0 + b_1\left(\sum^{n_i}_{j=1}w_j - \sum^{n_i}_{j=1}v_j \right) + \sum^{n_i}_{j=1}b_j v_j = b_0w_0 + b_1w_1 + b_1\left(\sum^{n_i}_{j=2}w_j - \sum^{n_i}_{j=2}v_j \right)  + \sum^{n_i}_{j=2}b_j v_j \\
& \leq \sum^{1}_{j=0}b_jw_j + b_2\left(\sum^{n_i}_{j=2}w_j - \sum^{n_i}_{j=2}v_j \right) + \sum^{n_i}_{j=2}b_j v_j =  \sum^{2}_{j=0}b_jw_j  + b_2\left(\sum^{n_i}_{j=3}w_j - \sum^{n_i}_{j=3}v_j \right)  + \sum^{n_i}_{j=3}b_j v_j \\
& ... \\
& \leq \sum^{n_i-1}_{j=0}b_jw_j + b_{n_i}\left(w_{n_i} - v_{n_i} \right) + b_{n_i} v_{n_i} =  \sum^{n_i}_{j=0}b_j w_j\\
\end{align*}
Therefore the left part is less than the right part, i.e. Eq.\ref{eq:thm3-b} is established and the theorem is proved.
\end{proof}

\subsubsection{The proof of Proposition \ref{prop:pooling}}\label{sec:proof-prop-pooling}
\begin{proof}
Consider any input region $\sD$. Let $\sD^{\prime}$ is the corresponding output region, i.e. $\sD^{\prime}  = \mA(\sD)$.  By Proposition 1, $\mathrm{Sd}(\sD^{\prime}) \leq \min\{\mathrm{Sd}(\sD),  k\}$. Because
\begin{equation*}
\mathrm{Hist}(\{\min \{\mathrm{Sd}(\sD), k\} |\ \sD \text{ is any input region}\}) = \mathrm{cl}_k(\vv)
\end{equation*}
then $\vw \preceq \mathrm{cl}_k(\vv)$.
Suppose $\vv \in \mathbb{R}^{n+1}$, i,e, $v_i = 0$ when $i > n$, it is easy to verify that $\mathrm{cl}_{k}(\vv) = \mM_{n,k}\vv$, where
\begin{equation*}
\mM_{n,k} \in \mathbb{R}^{\left(k+1\right) \times(n+1)},(M)_{i, j}=\delta_{i, \min \left(j, n+1\right)}
\end{equation*}
\end{proof}

\subsubsection{The proof of Proposition \ref{prop:maxpooling}}\label{sec:proof-prop-maxpooling}
Before the proof, we show the following lemma.

\begin{lemma}\label{lemma:1}
Suppose $\gamma_{n, n^{\prime}}$ satisfies the recursion formula in Theorem \ref{them:main} and the initial value satisfies that $|\gamma_{n_1, n_2} |_1 =\sum_{s=0}^{n_1}\left(\begin{array}{c}n_2 \\ s\end{array}\right)$, then any $\gamma_{n, n^{\prime}}$ satisfies $|\gamma_{n, n^{\prime}} |_1 =\sum_{s=0}^{n}\left(\begin{array}{c}n^{\prime} \\ s\end{array}\right)$.
\end{lemma}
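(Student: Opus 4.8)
The plan is to argue by induction on $n'$, using the recursion $\gamma_{n,n'} = \gamma_{n-1,n'-1} + \mathrm{dm}(\gamma_{n,n'-1})$ from Theorem \ref{them:main} together with the base case supplied by the hypothesis on the initial value. The key observation is that taking $\ell^1$-norms turns the recursion into a recursion on scalars that exactly matches the Pascal-type identity $\sum_{s=0}^{n}\binom{n'}{s} = \sum_{s=0}^{n-1}\binom{n'-1}{s} + \sum_{s=0}^{n}\binom{n'-1}{s}$. So the whole proof is really the verification that $|\cdot|_1$ is additive over the two pieces of the recursion and that $\mathrm{dm}(\cdot)$ preserves $\ell^1$-norm.

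First I would record two elementary facts about the norm. For any histograms $\vv, \vw \in \sV$ we have $|\vv + \vw|_1 = |\vv|_1 + |\vw|_1$, since addition is componentwise and all entries are nonnegative. And $|\mathrm{dm}(\vv)|_1 = |\vv|_1$, because $\mathrm{dm}$ merely shifts every component up by one index and sets the new bottom entry to $0$, so the multiset of nonzero values (hence their sum) is unchanged. Then, applying $|\cdot|_1$ to $\gamma_{n,n'} = \gamma_{n-1,n'-1} + \mathrm{dm}(\gamma_{n,n'-1})$ gives
\begin{equation*}
|\gamma_{n,n'}|_1 = |\gamma_{n-1,n'-1}|_1 + |\gamma_{n,n'-1}|_1 .
\end{equation*}

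Now I would set up the induction. The hypothesis states the claim holds for the initial values $\gamma_{n_1,n_2}$ (presumably those with $n_2$ at or below the index where the recursion starts, e.g. $n' = 1$, or the $\gamma_{1,n}$ together with $\gamma_{2,1}$ as in the main text); these serve as the base cases, where by assumption $|\gamma_{n,n'}|_1 = \sum_{s=0}^{n}\binom{n'}{s}$. For the inductive step, fix $n'$ and assume the formula holds for all smaller second index. Using the norm recursion above and the inductive hypothesis twice,
\begin{equation*}
|\gamma_{n,n'}|_1 = \sum_{s=0}^{n-1}\binom{n'-1}{s} + \sum_{s=0}^{n}\binom{n'-1}{s} = \sum_{s=0}^{n}\left[\binom{n'-1}{s-1} + \binom{n'-1}{s}\right] = \sum_{s=0}^{n}\binom{n'}{s},
\end{equation*}
where the middle step reindexes the first sum ($s \mapsto s-1$, using $\binom{n'-1}{-1} = 0$) and the last step is Pascal's rule. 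I should also handle the boundary convention $n > n'$, where $\gamma_{n,n'} := \gamma_{n',n'}$, but there $\sum_{s=0}^{n}\binom{n'}{s} = \sum_{s=0}^{n'}\binom{n'}{s} = 2^{n'}$ since the extra binomial coefficients vanish, so the formula is consistent and nothing new is needed. The main obstacle, such as it is, is purely bookkeeping: making sure the base cases named in the hypothesis actually cover enough $(n,n')$ pairs to start the recursion, and checking that the reindexing of the binomial sum is valid at the endpoints $s=0$ and $s=n$; there is no real analytic difficulty.
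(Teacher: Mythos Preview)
Your proposal is correct and follows essentially the same approach as the paper: both apply $|\cdot|_1$ to the recursion of Theorem~\ref{them:main}, use that $\mathrm{dm}$ preserves the $\ell^1$-norm to obtain the scalar recursion $|\gamma_{n,n'}|_1 = |\gamma_{n-1,n'-1}|_1 + |\gamma_{n,n'-1}|_1$, and then verify by induction that this matches the Pascal-type identity for $\sum_{s=0}^{n}\binom{n'}{s}$. Your write-up is slightly more detailed (you explicitly note additivity of $|\cdot|_1$ and treat the $n>n'$ convention), but the argument is the same.
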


\begin{proof}
Since $\gamma_{n, n^{\prime}} = \gamma_{n-1, n^{\prime}-1} + \mathrm{dm}(\gamma_{n, n^{\prime}-1})$ and $\mathrm{dm}(\cdot)$ does not change $|\cdot|_1$, we have $|\gamma_{n, n^{\prime}} |_1 = |\gamma_{n-1, n^{\prime}-1}|_1 + |\gamma_{n, n^{\prime}-1}|_1$. By the assumption, suppose that $|\gamma_{n_1, n_2} |_1 =\sum_{s=0}^{n_1}\left(\begin{array}{c}n_2 \\ s\end{array}\right)$ is established when $n_1 \leq n$ and $n_2 < n^{\prime}$.  Then
\begin{align*}
|\gamma_{n, n^{\prime}} |_1 & = \sum_{s=0}^{n-1} \left(\begin{array}{c}n^{\prime}-1 \\ s\end{array}\right) + \sum_{s=0}^{n} \left(\begin{array}{c}n^{\prime}-1 \\ s\end{array}\right)\\
& = \sum_{s=1}^{n} (C^{n^{\prime}-1}_{s} + C^{n^{\prime}-1}_{s-1}) + C^{n^{\prime}-1}_0\\
&= \sum_{s=1}^{n} C^{n^{\prime}}_{s} + C^{n^{\prime}}_0 \\
& = \sum_{s=0}^{n}\left(\begin{array}{c}n^{\prime} \\ s\end{array}\right)
\end{align*}
Therefore any $\gamma_{n, n^{\prime}}$ satisfies the formula.
\end{proof}

It is easy to verify that $|\gamma_{1,  n}|_1 = n+1 = \sum^{1}_{s=0} \left(\begin{array}{c}n\\ s\end{array}\right)$ and $|\gamma_{2, 1}|_1 = 2 = \sum^{2}_{s=0} \left(\begin{array}{c}1 \\ s\end{array}\right)$. By Lemma \ref{lemma:1}, $\gamma_{n,n^{\prime}}$ proposed by us satisfies that $|\gamma_{n, n^{\prime}}|_1 = \left(\begin{array}{c}n^{\prime} \\ s\end{array}\right)$. Next we prove Proposition \ref{prop:maxpooling}.

\begin{proof}
Denote the maxout layer by $h$, according to the proof of Theorem 10 in \citet{serra2018bounding}, one $k$-rank maxout layer with $n_l$ output nodes corresponds to divide one region by $\dfrac{k(k-1)}{2}n_l$ hyperplanes. Suppose $R$ is one input region with $\mathrm{Sd}(R) = n^{\prime}$ and partitioned into $p$ sub-regions $\{r_1, ..., r_p\}$.   Since one $d$-dimension space is at most partitioned into $\sum_{s=0}^{d}\left(\begin{array}{c}m \\ s\end{array}\right)$   sub-regions by $m$ hyperplanes, then by Lemma \ref{lemma:1}
\begin{equation*}
p \leq \sum_{s=0}^{n^{\prime}}\left(\begin{array}{c}\frac{k(k-1)n_l}{2} \\ s\end{array}\right) = |\gamma_{n^{\prime}, c}|_1
\end{equation*}
For any sub-region $p_i$, $\mathrm{Sd}(p_i) = n^{\prime}$. Since $h$ is equivalent to an affine transform in $p_i$ with matrix $A$ of rank $n_A$, we have that $\mathrm{Sd}(h(p_i)) \leq \min\{n_A, n^{\prime}\}$  by Proposition \ref{prop:sd}. Another fact is that $n_A \leq n_l$. Therefore  $\mathrm{Sd}(h(p_i)) \leq \min\{n_l, n^{\prime}\}$.  That is to say, $R$ with $n^{\prime}$ space dimension is divided in to at most $|\gamma_{n^{\prime}, c}|_1$ sub-regions and the space dimension of each output sub-region is no larger than $\min\{n_l, n^{\prime}\}$. If the space dimension histogram of input regions is $\boldsymbol{v}$ ($\in \mathbb{R}^{n}$) , then it is easy to verify that after the partition by $h$, the histogram of sub-regions $\boldsymbol{v}^{\prime}$ satisfies that $\boldsymbol{v}^{\prime} \preceq \operatorname{diag}\left\{|\gamma_{0, c}|_1, \ldots, |\gamma_{n, c}|_1\right\} \vv$. In addition, $h$ change their space dimension. Thus
\begin{equation*}
\vw \preceq \mathrm{cl}_{n_l}(\vv^{\prime}) \preceq  \mathrm{cl}_{n_l}(\operatorname{diag}\left\{|\gamma_{0, c}|_1, \dots, |\gamma_{n, c}|_1\right\} \vv).
\end{equation*}
Let $\mC = \operatorname{diag}\left\{|\gamma_{0, c}|_1, \dots, |\gamma_{n, c}|_1\right\}$, then $ \mathrm{cl}_{n_l}(\operatorname{diag}\left\{|\gamma_{0, c}|_1, \ldots, |\gamma_{n, c}|_1\right\} \vv) = \mM_{n, n_l}\mC \vv$.
\end{proof}

\subsubsection{The proof of Proposition \ref{prop:skipcomp}}\label{sec:proof-prop-skipcomp}
\begin{proof}
When the skip connection is not added, $\boldsymbol{w} \preceq \mB \boldsymbol{v}$. Let $\boldsymbol{v} = \boldsymbol{e}^n$, i.e. the the number of the input regions is one. Suppose the region is $R \subseteq \mathbb{R}^m$ and $R$  is divided into $p$ sub-regions. Apparently $p \leq |\mB\boldsymbol{e}^n|$ and $\mathrm{Sd}(R) = \mathrm{Sd}(r) = n \leq m$ where $r$ is one of the sub-regions. Suppose the part of network which is from the $i^{\text{th}}$ layer to the $j^{\text{th}}$ layer is equivalent to an affine transform with matrix $\mC$. Let $r^{\prime} = \mC(r)$. When the skip connection is added, the output of $r$ is $\begin{bmatrix}
\mC \\ 
\mI
\end{bmatrix}(r)$ denoted by $r^{\prime\prime}$. Since $\mathrm{rank}\left(\begin{bmatrix}
\mC \\ 
\mI
\end{bmatrix}\right) = m$, $\mathrm{Sd}(r^{\prime\prime}) \leq \min\{m, \mathrm{Sd}(r)\}  = n$. This implies that the space dimension of $r^{\prime}$ may be enhanced to $n$. Therefore the space dimension histogram of $p$ sub-regions $\boldsymbol{w}_R$ satisfies that 
\begin{equation}\label{eq:skip-B}
\boldsymbol{w}_R \preceq |B\boldsymbol{e}^n|_1 \boldsymbol{e}^n. 
\end{equation}
For any input region with $n$ space dimension, Eq.\ref{eq:skip-B} is always established. Thus, when the space dimension histogram of input regions is $\boldsymbol{v}$ the histogram of output regions satisfies 
\begin{equation}\label{eq:skipcomp}
\boldsymbol{w} \preceq \sum_{R} \boldsymbol{w}_R \preceq \sum_R|\mB\boldsymbol{e}^{\mathrm{Sd}(R)}|_1 \boldsymbol{e}^{\mathrm{Sd}(R)} = \sum_n v_n\left|\mB \boldsymbol{e}^{n}\right|_{1} \boldsymbol{e}^{n}.
\end{equation}
Let $\mC = \operatorname{diag}\{|\mB\boldsymbol{e}^0|_1,  |\mB\boldsymbol{e}^2|_1, ..., |\mB\boldsymbol{e}^n|_1,...\}$, then $\sum_n v_n\left|\mB \boldsymbol{e}^{n}\right|_{1} \boldsymbol{e}^{n} = \mC \boldsymbol{v}$.
\end{proof}

\subsubsection{The proof of Proposition \ref{prop:res}}\label{sec:proof-prop-res}
\begin{proof}
Any residual structure can be regarded as the composition of one skip connection and an linear transform. For any input region $r \subseteq \mathbb{R}^m$ partitioned by the network, the residual structure part has the following form.
\begin{equation*}
\vy =  \mathrm{Res}(\vx) = \begin{bmatrix}
\mI & \mI
\end{bmatrix}
\begin{bmatrix}
\mC \\
\mI
\end{bmatrix}\vx, \vx \in r
\end{equation*}
By Proposition \ref{prop:skipcomp}, the space dimension histogram of output regions $\begin{bmatrix}
\mC \\
\mI
\end{bmatrix}(r)$ denoted by $\vw$ satisfies that 
\begin{equation*}
\vw \preceq \sum v_n\left|\mB \boldsymbol{e}^{n}\right|_{1} \boldsymbol{e}^{n}
\end{equation*}
Since $\mathrm{rank}(\begin{bmatrix}
\mI & \mI
\end{bmatrix}) = m  \geq \mathrm{Sd}(r)$, according to Proposition \ref{prop:sd} the linear transform will not change the histogram.
\end{proof}

\subsubsection{Proposition \ref{prop:partition-efficiency} and its proof}\label{sec:proof-prop-partition-efficiency}
\begin{proposition}\label{prop:partition-efficiency}
For an MLP, let $f_m$ represent the first $m$ layers $(1 \le m \le l)$, i.e. $f_m(\boldsymbol{x}) $ is the output of the $m^{\text{th}}$ layer, and $F_{l+1}(\vz) = \sigma(\mW^{(l+1)} \boldsymbol{z} + \vb^{(l+1)})$  representing the $(l+1)^{\text{th}}$ layer in the MLP. Consider another network layer, 
\begin{equation*}
G_{l+1}(\boldsymbol{z}, \boldsymbol{y}) = \sigma\left(\mW^{(l+1,\prime)}
\begin{bmatrix}
\boldsymbol{z}\\
\boldsymbol{y} 
\end{bmatrix}
+ \vb^{(l+1,\prime)}\right)
\end{equation*}
where $\boldsymbol{z} = f_l(\boldsymbol{x}) \in \mathbb{R}^{n_l}, \boldsymbol{y} = f_{m}(\boldsymbol{x}) \in \mathbb{R}^{n_m}, 1 < m < l$. Then, given a specific $F_{l+1}$, there exists a $G_{l+1}$ such that the the total number of regions partitioned by $G_{l+1}$ is no more less than that by $F_{l+1}$.
\end{proposition}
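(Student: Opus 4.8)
The plan is to show that the skip-connected layer is a strict generalization of the plain layer: any $F_{l+1}$ can be reproduced \emph{exactly} by a suitable $G_{l+1}$, so the two induce the identical partition of the input space $\R^{n_0}$, the region counts coincide, and in particular the one for $G_{l+1}$ is no fewer than the one for $F_{l+1}$.

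First I would make the count precise. Layers $1,\dots,l$ partition $\R^{n_0}$ into $\sP(f_l)$, and on each region $R\in\sP(f_l)$ both $f_l$ and $f_m$ (recall $m<l$, so $f_m$'s partition is coarser) are affine, say $f_l|_R(\vx)=\mA_R\vx+\vc_R$ and $f_m|_R(\vx)=\mA_R'\vx+\vc_R'$. Using $G_{l+1}$ as the $(l+1)^{\text{th}}$ layer, neuron $j$ cuts $R$ along $\{\vx\in R:\ \mW^{(l+1,\prime)}_{j,:}[\mA_R\vx+\vc_R;\ \mA_R'\vx+\vc_R']+b^{(l+1,\prime)}_j=0\}$ (for $F_{l+1}$, delete the $\vy$-block), and ``the number of regions partitioned by $G_{l+1}$'' is the total number of cells obtained by refining every $R$ by this arrangement, i.e.\ $|\sP|$ of the network with $G_{l+1}$ in place of $F_{l+1}$. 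The only purpose of this step is to fix that the comparison is made in the common space $\R^{n_0}$ through the \emph{shared} subnetwork $f_l$ (and $f_m$), not in the abstract $(\vz,\vy)$-space.

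Then I would take the witness $\mW^{(l+1,\prime)}=[\mW^{(l+1)}\mid\boldsymbol{0}]$ (zero columns on the $\vy$-coordinates) and $\vb^{(l+1,\prime)}=\vb^{(l+1)}$. For every $\vx\in\R^{n_0}$,
\[
G_{l+1}\big(f_l(\vx),f_m(\vx)\big)=\sigma\big(\mW^{(l+1)}f_l(\vx)+\boldsymbol{0}\cdot f_m(\vx)+\vb^{(l+1)}\big)=F_{l+1}\big(f_l(\vx)\big),
\]
so the layer-$(l+1)$ activation pattern at every $\vx$ — and hence the whole map $\vx\mapsto\boldsymbol{S}_{\vh}(\vx)$ through layer $l+1$, in the sense of Section 2 — is unchanged. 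Therefore the set of linear regions of the network with $G_{l+1}$ equals that of the network with $F_{l+1}$, the two counts are equal, and in particular the former is not smaller, which proves the claim.

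The main obstacle here is not computational; it is the bookkeeping of the first step, namely setting up ``regions partitioned by $G_{l+1}$'' so that it is legitimately comparable with ``regions partitioned by $F_{l+1}$'' (same input space, same upstream layers, so that the skip input $\vy=f_m(\vx)$ is constrained to the image of the shared subnetwork). Once that is in place, the zero-padding construction makes $G_{l+1}$ and $F_{l+1}$ literally the same function of $\vx$ and the inequality is immediate. (If one later wants the strict inequality that actually motivates skip connections, the real work would instead be to exhibit a region $R$ on which the affine piece of $f_m$ has a component not expressible through $f_l|_R$, so that some neuron of $G_{l+1}$ can place a genuinely new cut; that is beyond what the present statement requires.)
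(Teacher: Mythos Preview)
Your proof is correct and uses the same witness as the paper: set $\mW^{(l+1,\prime)}=[\mW^{(l+1)}\mid \boldsymbol{0}]$, $\vb^{(l+1,\prime)}=\vb^{(l+1)}$ (the paper writes this as $c_{i,j}=0$ for the extra coordinates of each hyperplane). The only difference is in how the conclusion is drawn: the paper argues geometrically that each lifted hyperplane $H_i'$ in $(\vz,\vy)$-space crosses at least as many regions as $H_i$ did in $\vz$-space, whereas you observe more directly that with this choice $G_{l+1}(f_l(\vx),f_m(\vx))=F_{l+1}(f_l(\vx))$ identically on $\R^{n_0}$, so the induced partitions of the input space literally coincide and the counts are equal. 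Your route is shorter and avoids the somewhat informal ``interior point'' step in the paper's argument; the paper's phrasing, on the other hand, is set up to suggest how a \emph{strict} improvement could arise once the $c_{i,j}$ are allowed to be nonzero, which is the point you flag in your final parenthetical remark.
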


\begin{proof}
For a connected set $R \subseteq$ input space, $f_l(R)$ and $f_m(R)$ are still connected sets. For each hyperplane $H_i$ represented by $ \sum_{j=1}^{n_l} a_{i,j} z_j + b_i=0$ in $F_{l+1}$, design corresponding hyperplane $H_i'$, $ \sum_{j=1}^{n_l} a_{i,j} z_j + \sum_{j=1}^{n_m} c_{i,j} y_j + b_i = 0$, in $G_{l+1}$, where $c_{i,j}=0$. Suppose $H_i$ crosses $\{R_1,R_2,...,R_{k_i}\}$ and take $k_i$ intersections $\{\boldsymbol{p}_1,\boldsymbol{p}_2,...,\boldsymbol{p}_{k_i}\}$ where $\boldsymbol{p}_j$ is a interior point in $R_j(1\le j\le k_i)$. Let $f_l(\boldsymbol{x}_j)=\boldsymbol{p}_j$, then $(f_l(\boldsymbol{x}_j),(f_m(\boldsymbol{x}_j)) \in H_i'$ and it is also a interior point in $R_j'$ which is the same as $R_j$ when cutting last $n_m$ dimensions. So $H_i'$ crosses at least $k_i$ regions which indicates the total number of regions partitioned by $G_{l+1}$ is no more less than that by $F_{l+1}$.
\end{proof}

\subsubsection{Proposition \ref{prop:3layernet} and its proof}\label{sec:proof-prop:3layernet}
\begin{proposition}\label{prop:3layernet}
For a three-layer MLP which has two hidden layers, i.e.
\begin{equation}
    f(x) = \boldsymbol{W}^{(3)} \sigma(\boldsymbol{W}^{(2)}\sigma (\boldsymbol{W}^{(1)} \vx + \boldsymbol{b}_1) + \boldsymbol{b}^{(2)}) +\boldsymbol{b}^{(3)},
\end{equation}
suppose that $n_0 = 1$ and every segment partitioned by the first layer keeps their space dimension, i.e. their output of the first layer is still segment. If the input is concatenated to the output of the first layer (just like a skip connection), then no matter what the parameters in the first layer are, the practical maximum number of linear regions is $(n_1 + 1)(n_2+1)$. Specifically, without the special structure (just like an MLP) and assume that $n_1 = 3$, there exists some parameters in the first layer such that the practical maximum number is no more than $(n_1+1)(n_2+1) - n_2$.
\end{proposition}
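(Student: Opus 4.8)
The plan is to use that $n_0=1$, so $f:\R\to\R^{n_3}$ is a piecewise-linear function of one real variable and its number of linear regions equals one plus the number of points at which $f$ fails to be affine (its breakpoints). First I would record what the first layer does: its $n_1$ neurons create $n_1$ breakpoints $q^{(1)}<\dots<q^{(n_1)}$ and $n_1+1$ open segments $P_1,\dots,P_{n_1+1}$, and on $P_i$ the map $h_1$ is affine with derivative $\vd_i\in\R^{n_1}$ whose $j$-th entry equals $W^{(1)}_j$ if neuron $j$ is active on $P_i$ and $0$ otherwise; the hypothesis that every segment keeps its space dimension says precisely that $\vd_i\neq\vzero$ for every $i$. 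The structural observation I would build on is that crossing $q^{(i)}$ toggles exactly one neuron (each neuron is active on exactly one side of its own breakpoint), so $\vd_{i+1}-\vd_i$ is a nonzero multiple of a single coordinate vector $\ve_{\pi(i)}$ with $\pi$ a permutation of $\{1,\dots,n_1\}$; in particular $\vd_1,\dots,\vd_{n_1+1}$ are affinely independent.

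Next I would set up a single ``zigzag'' criterion for both halves of the statement. The $k$-th pre-activation of the second layer is an affine function of $h_1(\vx)$ without the skip connection and of $(h_1(\vx),\vx)$ with it, hence in both cases a piecewise-linear function $g_k$ of $x$ with the same $n_1$ breakpoints and $n_1+1$ linear pieces, whose slope on $P_i$ is $\langle\boldsymbol{W}^{(2)}_{k,:},\vd_i\rangle$, respectively $\langle(L_k,\alpha_k),(\vd_i,1)\rangle$. The neuron $\sigma(g_k)$, and therefore $f$, gains one extra breakpoint at each sign change of $g_k$; a piecewise-linear function with $n_1+1$ pieces has at most $n_1+1$ sign changes, and attaining $n_1+1$ forces the signs of $g_k$ at $-\infty,q^{(1)},\dots,q^{(n_1)},+\infty$ to alternate, which in turn forces the $n_1+1$ segment-slopes of $g_k$ to alternate in sign. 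Summing over the $n_2$ neurons this gives the uniform upper bound: at most $n_1+n_2(n_1+1)$ breakpoints, i.e. at most $(n_1+1)(n_2+1)$ linear regions in either architecture. (Degenerate first-layer configurations — coincident breakpoints, or some $g_k$ vanishing at a breakpoint or on a whole segment — occur on a measure-zero parameter set and only decrease the count, so I would dispose of them once at the outset.)

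For the skip-connection half, the key point is that $(\vd_1,1),\dots,(\vd_{n_1+1},1)$ are $n_1+1$ \emph{linearly} independent vectors in $\R^{n_1+1}$, immediately from the affine independence above. Hence for each $k$ the weights $(L_k,\alpha_k)$ of the $k$-th second-layer neuron can be chosen so that its sequence of segment-slopes is any prescribed vector — I would take it alternating in sign with magnitudes large relative to the fixed lengths of $P_2,\dots,P_{n_1}$ — and the bias so that the values of $g_k$ at the interior breakpoints strictly alternate in sign; then $g_k$ crosses zero exactly once inside each of the $n_1+1$ segments. The hard part will be the bookkeeping needed to turn this into $(n_1+1)(n_2+1)$ actual regions: one must ensure every first-layer breakpoint persists through the second ReLU (it does iff it lies in the positive region of some $g_k$, which I would arrange by offsetting the sign patterns of the $g_k$ across $k$ — this is where the construction uses that there are enough second-layer neurons), that the $n_2(n_1+1)$ second-layer breakpoints are pairwise distinct and distinct from all first-layer breakpoints, and that a generic output matrix $\boldsymbol{W}^{(3)}$ cancels none of these breakpoints. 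Each is a nonvanishing condition on finitely many parameters; meeting them simultaneously yields attainment of $(n_1+1)(n_2+1)$ for every admissible first layer.

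Finally, for $n_1=3$ without the skip connection I would exhibit a deliberately bad first layer: take $W^{(1)}_1<0<W^{(1)}_2,W^{(1)}_3$ with breakpoints ordered $q_2<q_1<q_3$, so the four segments have active sets $\{1\}$, $\{1,2\}$, $\{2\}$, $\{2,3\}$ — all nonempty, so the segment-dimension hypothesis holds — and one checks directly that $\vd_1-\vd_2+\vd_3=\vzero$. Consequently, for every weight vector of every second-layer neuron its three leftmost segment-slopes satisfy $m_1-m_2+m_3=0$, which is incompatible with $m_1,m_2,m_3$ being alternating in sign; by the criterion above $g_k$ therefore has at most $n_1=3$ sign changes, $f$ has at most $n_1+n_2n_1=3+3n_2$ breakpoints, and the number of linear regions is at most $3n_2+4=(n_1+1)(n_2+1)-n_2$ no matter how the remaining parameters of the network are chosen. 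More generally, any first layer whose direction vectors obey a linear relation with a sign pattern incompatible with alternation is bad in the same sense.
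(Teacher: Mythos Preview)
Your proposal is correct and takes a recognizably different route from the paper's proof. The paper argues geometrically in the ambient space: with the skip connection the $n_1+1$ image segments sit in $\R^{n_1+1}$, where any $n_1+1$ points lie on a common hyperplane, so by choosing one interior point per segment one obtains a second-layer hyperplane that crosses every segment; repeating $n_2$ times gives $(n_1+1)(n_2+1)$, and the matching upper bound is simply invoked from Serra et~al. For the obstruction at $n_1=3$ the paper writes down an explicit first layer (different weights from yours, but its direction vectors satisfy the same relation $\vd_1-\vd_2+\vd_3=0$) for which three of the four image segments lie in the plane $z=0$ in $\R^3$; then the only $2$-plane through interior points of all three is $z=0$ itself, which contains them and so divides none, hence no second-layer hyperplane can cut all four segments.

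Your argument replaces this picture by a single algebraic ``zigzag'' criterion on the segment-slopes of each second-layer pre-activation, which lets you (i) derive the upper bound $(n_1+1)(n_2+1)$ internally rather than cite it, (ii) prove achievability with the skip via the linear independence of $(\vd_1,1),\dots,(\vd_{n_1+1},1)$, and (iii) reduce the $n_1=3$ obstruction to the clean incompatibility of $m_1-m_2+m_3=0$ with sign alternation. What your approach buys is a self-contained, uniform mechanism and an explicit generalization (any alternating-sign linear relation among the $\vd_i$ obstructs). What the paper's approach buys is brevity and a vivid picture: the achievability step is the one-line fact that $n_1+1$ points determine a hyperplane, and the coplanarity of three segments makes the obstruction visually immediate. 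The two arguments are dual viewpoints on the same phenomenon --- your $(\vd_i,1)$ are precisely the tangent directions of the paper's image segments in $\R^{n_1+1}$, and your linear-independence statement is exactly what certifies that the paper's chosen interior points are in general position.
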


\begin{proof}
Denote $h_1(\vx) = \sigma (\boldsymbol{W}^{(1)} \vx+\vb^{(1)})$. The first layer divides input space into $n_1+1$ regions $\{r_1,r_2,...r_{n_1+1}\}$, then $h_1(r_i) \subseteq \mathbb{R}$. Because of the skip connection, the dimension of hyperplanes in the second layer is $n_1$. It is apparent that for any $n_1+1$ points there always exits a hyperplane containing them. Thus, for any interior point $x_i \in r_i(1\le i \le n_1+1)$, there exists a hyperplane contaning them and therefore crossing $n_1+1$ 
regions. We have $n_2$ hyperplanes in the second layer, which shows the number of linear regions is $(n_1+1)(n_2+1)$. And obviously the maximum number is not larger than it (see Theorem 7 in \citet{serra2018bounding})

As for the special case, take $[0,1]$ as input space without loss of generality. Let 
\begin{equation}
    \mathbb{W}_1 = 
        \begin{bmatrix}
            \frac1{t_1}\\
            -\frac1{t_2}\\
            \frac1{t_3}
        \end{bmatrix}
    , \mathbb{b}_1 = 
        \begin{bmatrix}
            -1\\
            1\\
            -1
        \end{bmatrix}
\end{equation}
Then we get four output regions in $\mathbb{R}^3$ shown in Figure \ref{fig:prop9}. We can see that $R_1,R_2$ and $R_3$ are on the same plane, i.e. the x-y plane. For any interior points $\vp_i \in R_i,i=1,2,3$,  the only hyperplane crossing them is x-y plane which is not able to divide $R_1,R_2$ and $R_3$. So the practical maximum number is no more than $(n_1+1)(n_2+1) - n_2$.
\end{proof}

\begin{figure}
\centering
\includegraphics[width=0.7\linewidth]{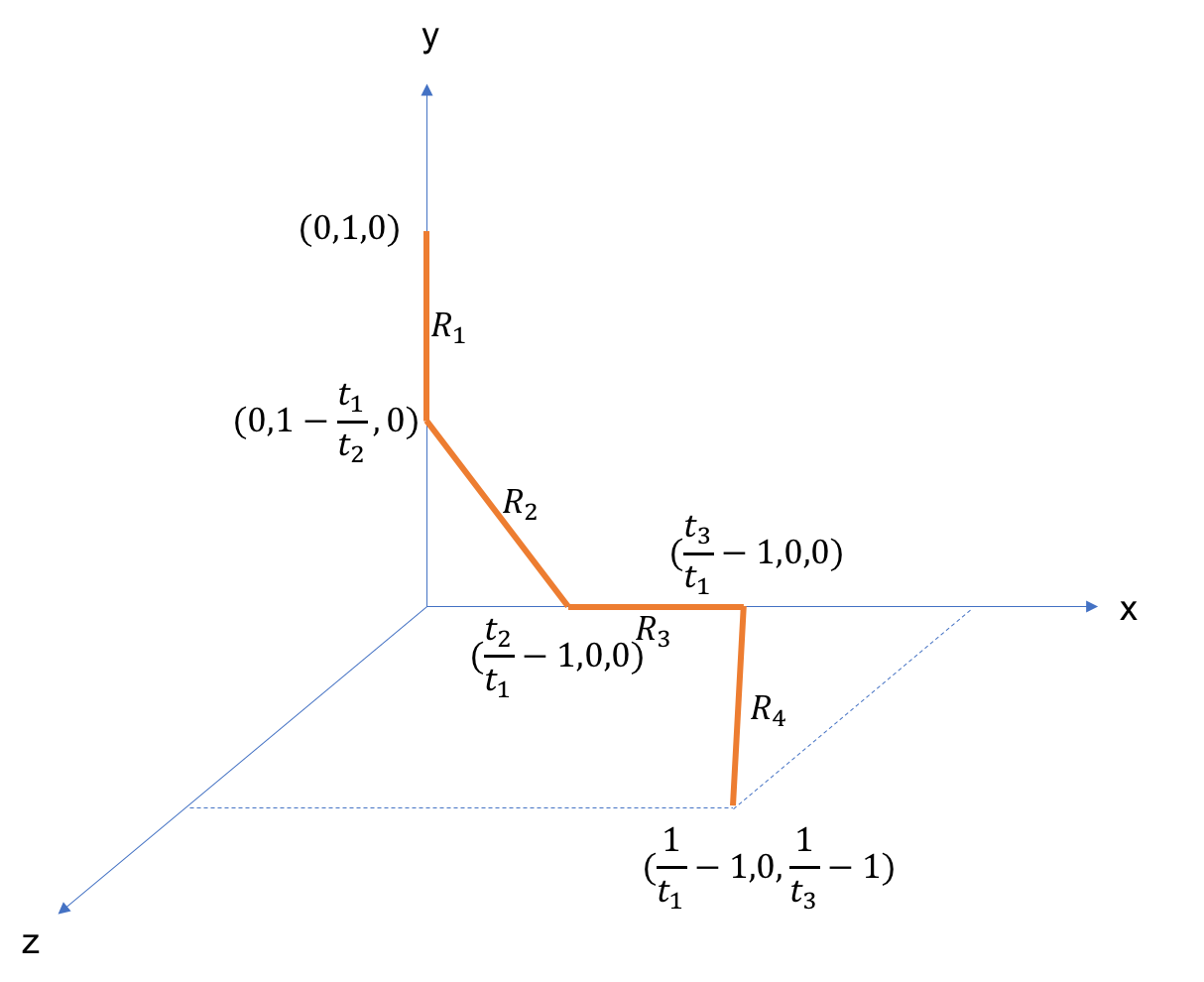}
\caption{The four output regions in $\mathbb{R}^3$}
\label{fig:prop9}
\end{figure}

\subsection{Examples}

\subsubsection{An example of $\gamma^{\mathrm{new}}_{n,n^{\prime}}$ and $\gamma^{\mathrm{serra}}_{n,n^{\prime}}$} \label{sec:exmp:gamma}

Here $n^{\prime} = 6$. Then $\gamma^{\text{ours}}_{n,n^{\prime}}$ and $\gamma^{\text{serra}}_{n,n^{\prime}}$ are shown as follows.
\begin{equation}\label{eq:gamma-comp}
\underset{\gamma_{\cdot, 6}^{\text{ours}}}{\left[\begin{matrix}
0 & 0 & 0 & 0 & 0 & 0 & 1\\ 
0 & 0 & 0 & 0 & 1 & 6 & 6\\ 
0 & 0 & 1 & 4 & 14 & 15 & 15\\ 
0 & 2 & 5 & 16 & 20 & 20 & 20\\ 
0 & 2 & 9 & 15 & 15 & 15 & 15\\ 
0 & 2 & 6 & 6 & 6 & 6 & 6\\ 
1 & 1 & 1 & 1 & 1 & 1 & 1
\end{matrix}\right]}\quad\quad\quad
\underset{\gamma_{\cdot, 6}^{\text{serra}}}{\left[\begin{matrix}
0 & 0 & 0 & 0 & 0 & 0 & 1\\ 
0 & 0 & 0 & 0 & 0 & 6 & 6\\ 
0 & 0 & 0 & 0 & 15 & 15 & 15\\ 
0 & 0 & 0 & 20 & 20 & 20 & 20\\ 
0 & 0 & 15 & 15 & 15 & 15 & 15\\ 
0 & 6 & 6 & 6 & 6 & 6 & 6\\ 
1 & 1 & 1 & 1 & 1 & 1 & 1
\end{matrix}\right]}
\end{equation}

\subsubsection{An example of upper bound computation for U-net}\label{sec:exmp-unet-computation}
We take the U-net in Appendix \ref{sec:network-skip} as example. Firstly, we use matrices to represent all layers except skip connections. When computing upper bound convolutional layers are regarded as fully-connected layers denoted by $\mC_i$. Suppose pooling layers are average pooling layers and unpooling ones are filling-zero ones. They are denoted by $\mP_i$ and $\mU_i$. Here, the subscript $i$ means the order in the network. Then according to Proposition \ref{prop:matrix-computation} and Proposition \ref{prop:pooling} we have
\begin{align*}
& \mC_1 = \mB_{2304}, \mC_2 = \mB_{1152}, \mC_3 = \mB_{576}, \mC_4 = \mB_{288}, \\
& \mC_5 = \mB_{144}, \mC_6 = \mB_{288},  \mC_7 = B_{576}, \mC_8 = \mB_{2304} \\
& \mP_1 = \mM_{2304,576}, \mP_2 = \mM_{1152,288}, \mP_3 = \mM_{576,144}, \\
& \mU_1 = \mM_{144, 144} = \mI_{144}, \mU_2 = \mM_{288,288} = \mI_{288}, \mU_3 = \mM_{576, 576} = \mI_{576}.
\end{align*}
where $\mB$, $\mM$ are defined by Proposition \ref{prop:matrix-computation}. By Proposition \ref{prop:skipcomp}, the upper bound $N$ is computed as follows.
\begin{align*}
\mS_3^{\prime} &=\mU_1 \mC_5 \mM_{288, 144} \mC_4 \mM_{144, 288} \mP_3  \in \mathbb{R}^{144\times 576}\\
\mS_3 & = \operatorname{diag}\{|\mS_3^{\prime}\boldsymbol{e}^0|_1,  |\mS_3^{\prime}\boldsymbol{e}^2|_1, \dots, |\mS_3^{\prime}\boldsymbol{e}^{576}|_1,\} \in \mathbb{R}^{576\times 576}\\
\mS_2^{\prime} &= \mU_2  \mC_6  \mM_{576,288} \mS_3  \mC_3 \mM_{288,576} \mP_2 \in \mathbb{R}^{288, 1152}\\
\mS_2 & = \operatorname{diag}\{|\mS_2^{\prime}\boldsymbol{e}^0|_1,  |\mS_2^{\prime}\boldsymbol{e}^2|_1, \dots, |\mS_2^{\prime}\boldsymbol{e}^{1152}|_1,\} \in \mathbb{R}^{1152\times 1152}\\
\mS_1^{\prime} & = \mU_1 \mC_7 \mM_{1152, 576} \mS_2  \mC_2 \mM_{576, 1152} \mP_1 \in \mathbb{R}^{576, 2304}\\
\mS_1 &= \operatorname{diag}\{|\mS_1^{\prime}\boldsymbol{e}^0|_1,  |\mS_1^{\prime}\boldsymbol{e}^2|_1, \dots, |\mS_1^{\prime}\boldsymbol{e}^{2304}|_1,\} \in \mathbb{R}^{1152\times 1152}\\
N & = |\mC_8 \mS_1 \mC_1 \mM_{576,2304} \boldsymbol{e}^{576}|_1
\end{align*}

\subsubsection{The comparison of $B^{\mathrm{ours}}_n$ and $B^{\mathrm{serra}}_n$}\label{sec:exmp-B-compare}
Here we consider $n = 6$. By Eq.\ref{eq:gamma-comp} and the definition of clipping function, we have 
\begin{equation*}
\underset{B_{6}^{\text{ours}}}{\left[\begin{matrix}
1 & 0 & 0 & 0 & 0 & 0 & 1\\ 
0 & 7 & 0 & 0 & 1 & 6 & 6\\ 
0 & 0 & 22 & 4 & 14 & 15 & 15\\ 
0 & 0 & 0 & 38 & 20 & 20 & 20\\ 
0 & 0 & 0 & 0 & 22 & 15 & 15\\ 
0 & 0 & 0 & 0 & 0 & 7 & 6\\ 
0 & 0 & 0 & 0 & 0 & 0 & 1
\end{matrix}\right]}\quad\quad\quad
\underset{B_{6}^{\text{serra}}}{\left[\begin{matrix}
1 & 0 & 0 & 0 & 0 & 0 & 1\\ 
0 & 7 & 0 & 0 & 0 & 6 & 6\\ 
0 & 0 & 22 & 0 & 15 & 15 & 15\\ 
0 & 0 & 0 & 42 & 20 & 20 & 20\\ 
0 & 0 & 0 & 0 & 22 & 15 & 15\\ 
0 & 0 & 0 & 0 & 0 & 7 & 6\\ 
0 & 0 & 0 & 0 & 0 & 0 & 1
\end{matrix}\right]}
\end{equation*}

\subsubsection{An simple example of imperfect partition}\label{sec:exmp-imperfect-partition}
In this part, we use a simple example (in Figure \ref{fig:imperfect-partition}) to illustrate imperfect partition. Consider a two-layer MLPs with $n_0 = 1, n_1 = 2$.  The original input space is $\mathbb{R}$, i.e. the number axis or one line. The first layer partition the line into three parts (see Figure \ref{fig:input}) and the corresponding output regions in $\mathbb{R}^2$ are shown in Figure \ref{fig:first-output}. In Figure \ref{fig:hyperplane}, it is easy to observe that any hyperplane can not partition all three regions simultaneously.

\begin{figure}
\centering
\subfigure[input regions]{
\begin{minipage}{0.31\linewidth}
\centering
\includegraphics[width=1\linewidth]{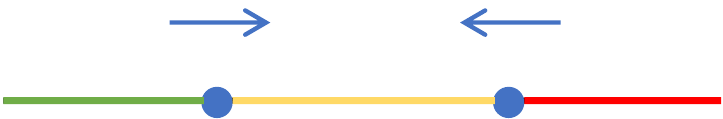}
\label{fig:input}
\end{minipage}%
}
\subfigure[output of the first layer]{
\begin{minipage}{0.32\linewidth}
\centering
\includegraphics[width=1\linewidth]{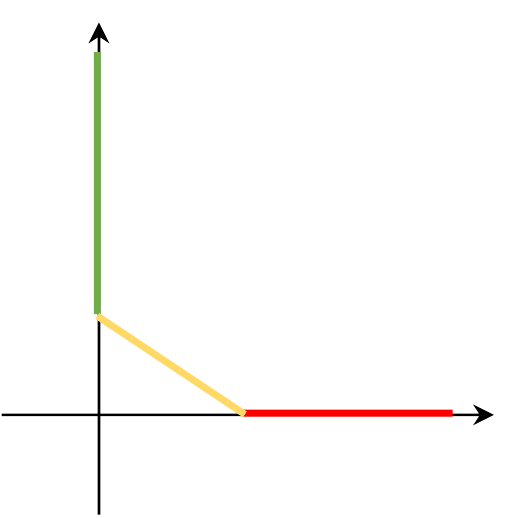}
\label{fig:first-output}
\end{minipage}%
}
\subfigure[hyperplanes in the second layer]{
\begin{minipage}{0.32\linewidth}
\centering
\includegraphics[width=1\linewidth]{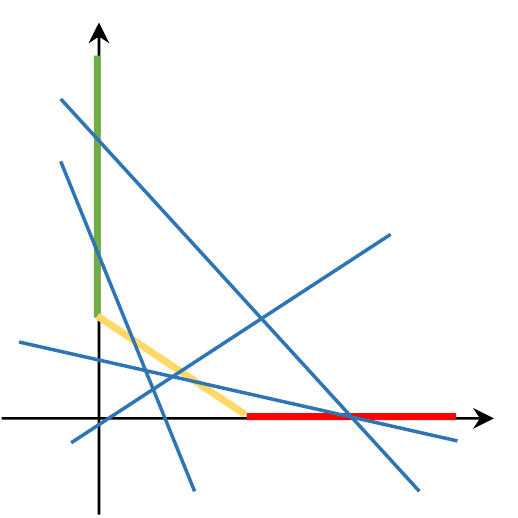}
\label{fig:hyperplane}
\end{minipage}
}
\centering
\caption{(a) The input region $\mathbb{R}$ is divided into three parts in three colors and activation directions of blue points are drawn above the line; (b) The output regions of the first layer; (c) The blue lines represent hyperplanes in the second layer}
\label{fig:imperfect-partition}
\end{figure}

\subsection{Network architectures}

\subsubsection{Network architectures in Table \ref{tab:skip}}\label{sec:network-skip}
We take the first setting in Table \ref{tab:skip} as example. The numbers of "4-8-16-32" correspond to one in red color in Figure \ref{fig:skip} and all the numbers represent the channel number of current tensor. For all setting in Table 1, the channel numbers of input and output are $1$. The kernel size in every convolutional layer is $(3\times 3)$ with $\text{stride} = 1$. We keep the size unchanged after the convolutional layer by padding zero. We use average-pooling as pooling layers and filling-zero as unpooling layers. The down-sampling rate and up-sampling rate are both $2$. Except from the last convolutional layer, ReLU is added in other ones. The only differences in setting are channel numbers and the depth of down-sampling.

\begin{figure}
\centering
\includegraphics[width=0.9\linewidth]{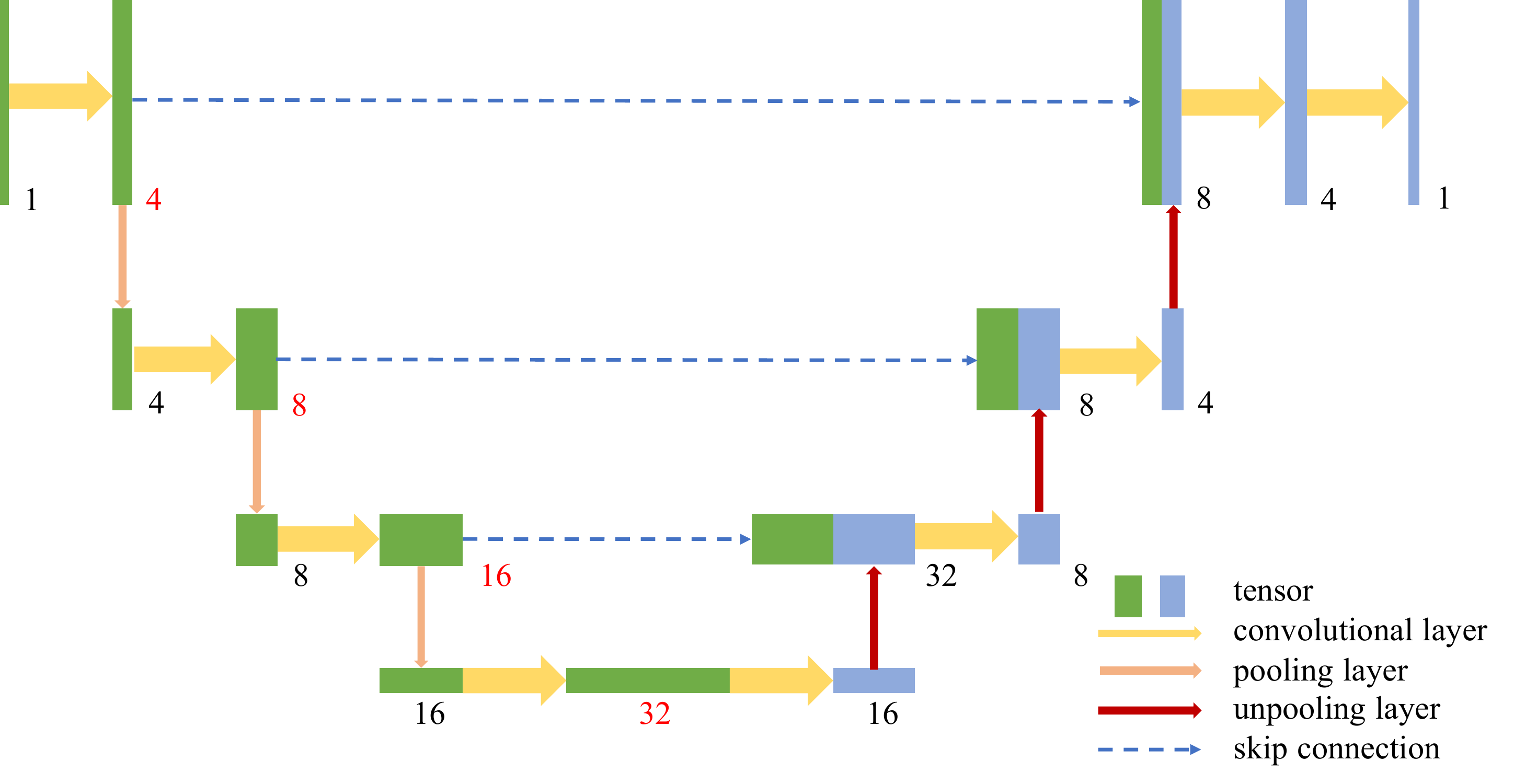}
\caption{Network architecture with skip connections in No.1 of Table \ref{tab:skip}}
\label{fig:skip}
\end{figure}

\subsubsection{Network architectures in Table \ref{tab:res}}\label{sec:network-res}

We also take the first setting in Table \ref{tab:res} as example.  The numbers of "4-p16-p16-r16-r16-r16" correspond to one in red color in Figure \ref{fig:res}. "p" means that before the convolutional layer, there exists a pooling layer. The last part of network is three fully-connected layers and ReLU is not added in the final layer. Other setting is the same as Appendix \ref{sec:network-skip}.

\begin{figure}
\centering
\includegraphics[width=0.9\linewidth]{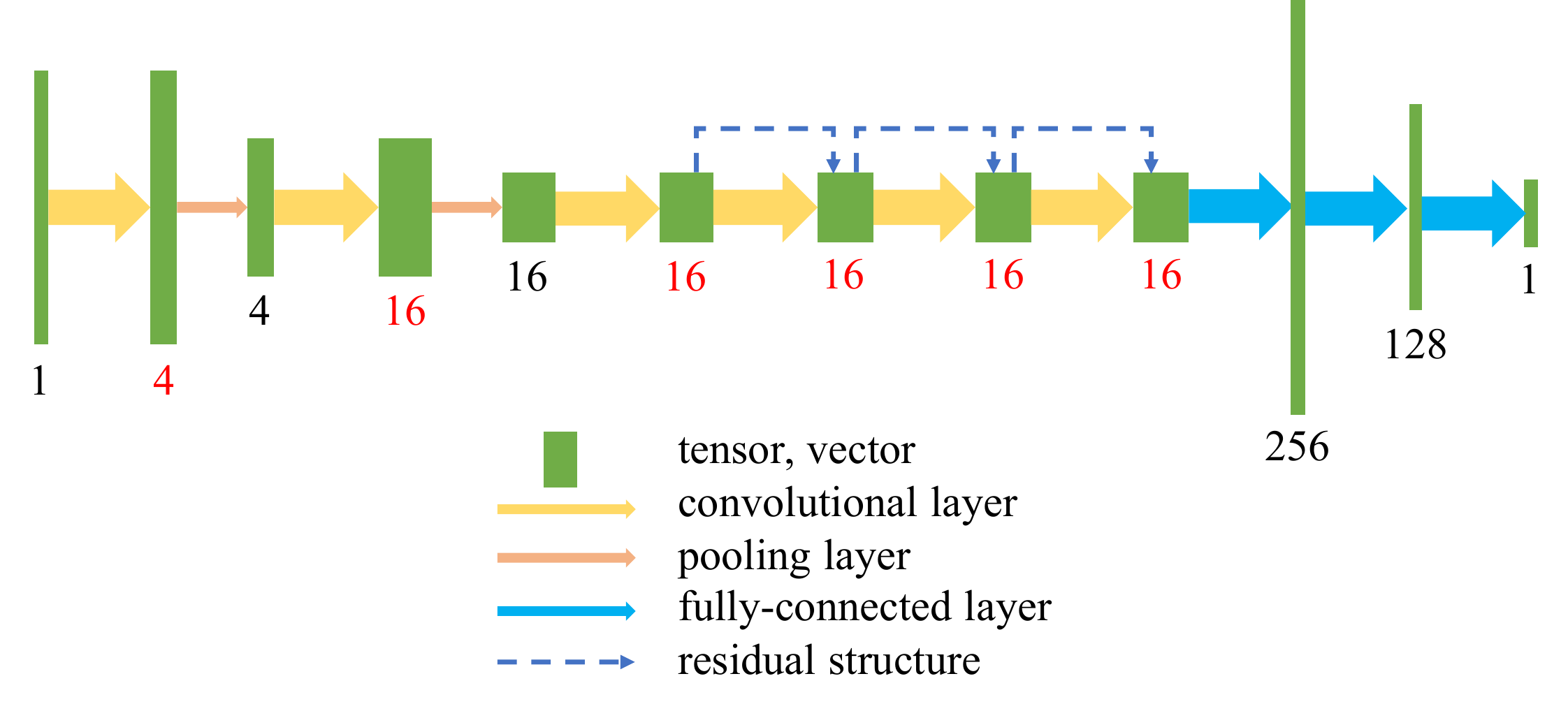}
\caption{Network architecture with residual structures in No.1 of Table \ref{tab:res}}\label{fig:res}
\end{figure}

\end{document}